\DeclareFontFamily{OT1}{pzc}{}
\DeclareFontShape{OT1}{pzc}{m}{it}{<-> s * [1.10] pzcmi7t}{}
\DeclareMathAlphabet{\mathpzc}{OT1}{pzc}{m}{it}
\newcommand{\grad}[1]{\nabla_{#1^{\tran}} }
\newcommand{\w}{\bm{w}}
\newcommand{\eqdef}{\:\overset{\Delta}{=}\:}
\DeclareMathOperator*{\argmin}{argmin}
\newcommand{\Li}{\mathcal{L}_i}
\newcommand{\B}[2]{\mathcal{B}_{#1,i,#2}}
\newcommand{\tran}{{\sf T}}
\newtheorem{theorem}{Theorem}
\newtheorem{assumption}{Assumption}
\newtheorem{lemma}{Lemma}
\definecolor{Gray}{gray}{0.8}
\definecolor{LightCyan}{rgb}{0.88,1,1}
\begin{document}
%
\title{Federated Learning under Importance Sampling}
%
%
%

\author{Elsa~Rizk,~\IEEEmembership{}
        Stefan~Vlaski,~\IEEEmembership{Member,~IEEE,}
        and~Ali~H. Sayed,~\IEEEmembership{Fellow,~IEEE.}
\thanks{The authors are with the School of Engineering, École Polytechnique Fédérale de Lausanne, 1015 Lausanne, Switzerland (e-mail: elsa.rizk@epfl.ch; stefan.vlaski@epfl.ch; ali.sayed@epfl.ch).
A short conference article dealing with an earlier version of this work without extended arguments and proofs appears in \cite{DFL}.
\\ This work has been submitted for review.}
}

\maketitle

\begin{abstract}
Federated learning encapsulates distributed learning strategies that are managed by a central unit. Since it relies on using a selected number of agents at each iteration, and since each agent, in turn, taps into its local data, it is only natural to study optimal sampling policies for selecting agents and their data in federated learning implementations. Usually, only uniform sampling schemes are used. However, in this work, we examine the effect of importance sampling and devise schemes for sampling agents and data non-uniformly guided by a performance measure. We find that in schemes involving sampling without replacement, the performance of the resulting architecture is controlled by two factors related to data variability at each agent, and model variability across agents. We illustrate the theoretical findings with experiments on simulated and real data and show the improvement in performance that results from the proposed strategies.  
\end{abstract}

\begin{IEEEkeywords}
federated learning, distributed learning, importance sampling, asynchronous SGD, non-IID data, heterogeneous agents
\end{IEEEkeywords}

%
\IEEEpeerreviewmaketitle

\vspace{-0.2cm}
\section{Introduction}

\IEEEPARstart{I}{n} this work, we focus on algorithms that fall into the broad class of stochastic gradient descent (SGD). We consider a collection of $K$ heterogeneous agents that may have different computational powers. Each agent $k$ has locally $N_k$ data points, which we denote by $\{x_{k,n}\}$; the subscript $k$ refers to the agent, while the subscript $n$ denotes the sample index within agent $k$'s dataset. The goal of the agents is to find an optimizer for the aggregate risk function:
\begin{equation}\label{eq:GenProb-Uni}
	w^o \eqdef \argmin_{w \in \mathbb{R}^M} \frac{1}{K} \sum_{k=1}^K P_k(w),
\end{equation}
 where each $P_k(\cdot)$ is an empirical risk defined in terms of a loss function $Q_k(\cdot)$:
\begin{equation}
	P_k(w) \eqdef \frac{1}{N_k} \sum_{n=1}^{N_k} Q_k(w;x_{k,n}).
\end{equation}

Multiple strategies exist for solving such problems. They can be categorized into two main classes: a) partially decentralized strategies, which include a central process with access to all data and which controls the distribution of the data into the nodes for processing \cite{Duchi11,Zinkevich10,Bertsekas96anew}; and b) fully decentralized strategies, which consist of multiple agents connected by a graph topology and operating locally without oversight by a central processor \cite{sayed2014adaptation,chen2012limiting,Nedic09,Duchi12a}. Federated learning~\cite{mcmahan16,li2018federated,smith2017federated,caldas2018expanding,stattler19,konen2016federated,mohri2019agnostic,corinzia2019variational,khodak2019adaptive,chen2018federated,bonawitz2017practical,geyer2017differentially,mcmahan2017learning} offers a midterm solution, which consists of several agents collecting and processing local data that are then aggregated at the central processor. 

When implementing SGD, most strategies choose the samples according to a uniform distribution. In this work, we shall consider more general non-uniform sampling schemes, where the agents are sampled according to some distribution $\pi_k$ and the local data at agent $k$ are in turn sampled according to some other distribution $\pi_n^{(k)}$. In this setting, the central processor selects the subset of agents for processing according to $\pi_k$ and, once selected, an agent $k$ will sample its data according to $\pi_n^{(k)}$. The importance sampling process in this article therefore involves two layers. We use the superscript $(k)$ to denote the sampling distribution of the data at agent $k$. The sampling distributions $\{\pi_k, \pi_n^{(k)}\}$ are not fixed; instead we will show how to {\em adapt} them in order to enhance performance. At the same time, we will provide a detailed convergence analysis and establish performance limits. 

\subsection{Related Work}
Several works studied the convergence of the federated learning algorithm or distributed SGD under differing assumptions. These assumptions usually relate to the nature of the data (IID or non-IID), nature of the cost function (convex or non-convex), agent participation (full or partial), and operation (synchronous or asynchronous) \cite{jiang2018linear,khaled2019analysis,stich2018local,Wang2018CooperativeSA,Zhou_2018,yu2018parallel,wang2018adaptive,yu19,xie2019asynchronous,li2019convergence,li2018federated,smith2017federated,liu2019clientedgecloud}. Other works examine the convergence behavior of variations of the traditional FedAvg algorithm, such as FedProx \cite{li2018federated}, hierarchical version of FedAvg \cite{liu2019clientedgecloud},  multi-task federated learning \cite{smith2017federated}, and dynamic FedAvg \cite{DFL} -- see Table 1. 

\begin{table*}[htbp]
	\vspace*{-0.2cm}
	\caption{List of references on the convergence analysis of federated learning under different assumptions. This work along with our previous work \cite{DFL} are the only ones to tackle the 3 challenges of federated learning(non-IID data, asynchronous mode of operation, partial agent participation).}
	\vspace*{-0.4cm}
	\begin{center}
		\begin{tabular}{|c|c|c|c|c|c|c|}
			\hline
			\rowcolor{Gray}
			\textbf{References} & \textbf{Algorithm}& \textbf{Function Type}& \textbf{Data Heterogeneity} &\textbf{Operation} & \textbf{Agent Participation} & \textbf{Other Assumptions} \\
			\hline
			\cite{khaled2019analysis} & dist. gradient descent & convex &  \textbf{\textit{non-IID}} & synchronous & full & smooth\\
			\hline
			\cite{stich2018local} & dist. SGD & convex & IID & synchronous & full & smooth \\
			\hline
			\cite{Wang2018CooperativeSA, Zhou_2018} & dist. SGD & non-convex & IID & synchronous & full & smooth \\
			\hline
			\cite{yu2018parallel} & dist. SGD & non-convex & \begin{tabular}{@{}c@{}}
				\textbf{\textit{non-IID}} \\ IID
			\end{tabular} & \begin{tabular}{@{}c@{}}
				synchronous \\   \textbf{\textit{asynchronous}}
			\end{tabular} & full & - \\
			\hline
			\cite{wang2018adaptive} & dist. SGD & convex & \textit{\textbf{non-IID}} & synchronous & full & bounded gradients \\
			\hline
			\cite{yu19} & dist. momentum SGD & non-convex & \textbf{\textit{non-IID}} & synchronous & full & - \\
			\hline
			\cite{xie2019asynchronous} & FedAvg & \begin{tabular}{@{}c@{}}
				convex \\   some non-convex
			\end{tabular} & \textit{\textbf{non-IID}} & \textbf{\textit{asynchronous}} & full & -\\
			\hline
			\cite{li2019convergence} & FedAvg & convex & \textbf{\textit{non-IID}} & synchronous & \textbf{\textit{partial}} & bounded gradients
			\\ \hline
			\cite{li2018federated}  & FedProx & non-convex & \textit{\textbf{non-IID}} & \textbf{\textit{asynchronous}} &  \textbf{\textit{partial}} & - \\ \hline
			 \cite{liu2019clientedgecloud} & HierFAVG & \begin{tabular}{@{}c@{}}
			 	convex \\    non-convex
			 \end{tabular} &  \textit{\textbf{non-IID}} & synchronous & full & - \\ \hline
			 \cite{smith2017federated} & MOCHA & convex & \textit{\textbf{non-IID}} & synchronous & full & - \\ \hline
			\cite{DFL} & Dynamic FedAvg & convex & \textbf{\textit{non-IID}} & \textbf{\textit{asynchronous}} & \textbf{\textit{partial}} & model drift \\
			\hline
			\rowcolor{LightCyan} 
			\emph{this work} & ISFedAvg& convex & \textbf{\textit{non-IID}} & \textbf{\textit{asynchronous}} & \textbf{\textit{partial}} & importance sampling \\
			\hline
		\end{tabular}
		\label{tab:ref}
	\end{center}
\end{table*}

By contrast, not much work has been done on selection schemes for agents and data in federated learning. Given the architecture of a federated learning solution, this is a natural and important question to consider. The existing works in this domain can be split into two categories: those seeking better accuracy, and those seeking fairness. Of the works pertaining to the first category, reference \cite{nishio2019client} develops a new client selection scheme, called FedCS, where the goal of the central server is to choose as many agents as possible that can complete an iteration by a required deadline, after acquiring information about the agents' resources. 
Reference \cite{HybridFL} builds on this previous work to deal with non-IID data, and allows the server to collect some of the data from the agents and participate in the training of the model. The authors of \cite{nguyen2020fast} consider non-uniform sampling of agents and suggest approximate sampling probabilities that maximize the average inner product of the local gradient with the global gradient. References \cite{mohri19a,Li2020Fair} fall under the second category; in agnostic federated learning \cite{mohri19a}, the data distribution is assumed to be a mixture of the local distributions, and a minimax problem  for agent selection is solved. Reference \cite{Li2020Fair} generalizes the previous work by reweighting the cost function and assigning higher weights to agents with higher loss.

While there exist works that study the effect of importance sampling in distributed learning \cite{impSampSayed,alain2015variance,Nati14, zhao2015stochastic, jaggi17}, all of these works apply importance sampling to the data at each agent. To our knowledge, there are no works that examine the \textit{combined effect} of two hierarchical layers of sampling: one for the nodes and another for their data. By introducing a two-layer importance sampling scheme to the federated learning paradigm, we can tackle the problem of importance sampling both in relation to agents {\em and} also in relation to data.
\vspace{-0.1cm}
\subsection{Sampling and Inclusion Probabilities}
Before describing the problem setting, we need to clarify the difference between two notions: (a) sampling probability and (b) inclusion probability. Consider the following illustrative example. Consider $N=4$ balls of which we wish to choose $B=2$ balls non-uniformly and without replacement. Let the sampling probabilities be  $\pi_n = \{1/3, 1/6, 1/3,1/6\}$. This means that, initially, balls 1 and 3 are twice as likely to be selected compared to balls 2 and 4. For the first trial, all the inclusion probabilities are equal to the sampling probabilities, i.e., $\mathbb{P}(n \text{ chosen on $1^{st}$ trial}) = \pi_n$. However, since we are sampling \textit{without replacement}, the inclusion probabilities for the second trial depend on the outcome of the first trial, i.e., $\mathbb{P}(n \text{ chosen on $2^{nd}$ trial} | m$ chosen on $1^{st}$ trial$) = \pi_n /(1-\pi_m) $. Using the sampling probabilities, we can evaluate the likelihood that each ball will end up belonging to the selected set of $2$ balls. In particular, the probability that ball $1$ is chosen either in the first or second trial is given by: 
\begin{align}
	&\mathbb{P}(\text{1 chosen}) \notag \\
	 &= \sum_{n=2}^4 \mathbb{P}\big(\text{$1$ chosen on $1^{st}$ trial \& $n$ chosen on $2^{nd}$ trial)} \notag \\
	&\quad+\mathbb{P}\big(n \text{ chosen on $1^{st}$ trial \& $1$ chosen on $2^{nd}$ trial}) \notag \\
	 &= \sum_{n=2}^4 \pi_1  \frac{\pi_n}{1-\pi_1}+ \pi_n \frac{\pi_1}{1-\pi_n} .
\end{align}
Thus, the sampling probability is the working probability. It is the probability used to actually choose the samples, while the inclusion probability is a descriptive probability that indicates the likelihood of a ball being included in the final selected subset. Observe that the inclusion probabilities depend on the sampling scheme, while the sampling probabilities do not. When considering \textit{uniform sampling without replacement}, the inclusion probability is a multiple of the sampling probability. For example, sampling $B$ numbers from $\{1,2,\cdots,N\}$ with sampling probabilities $1/N$, the inclusion probability is found to be $\mathbb{P}(n \in \mathcal{B}) =  B/N$. Note further that while the sampling probabilities sum to 1 over all the sampling space, the inclusion probabilities $\mathbb{P}(n \in \mathcal{B})$ sum to $B$. In our derivations, we will be relying frequently on the inclusion probabilities.

Next, we consider a total number of $K$ agents. At each iteration $i$ of the algorithm, a subset of agents $\Li$ of size $L$ is chosen randomly \textit{without replacement}. We denote the probability that agent $k$ is included in the sample by $Lp_k$ \cite{horvitz1952generalization}, i.e.,
\vspace{-0.1cm}
\begin{equation}\label{eq:agentIncProb}
	p_k \eqdef \frac{\mathbb{P}\big(k \in \Li \big)}{L}.
\end{equation}
In addition, each sampled agent $k$ will run a mini-batch SGD by sampling $B_k$ data points $\mathcal{B}_{k,i}$ \textit{without replacement} from its local data. We denote the probability of inclusion of data point $n$ by $B_kp_n^{(k)}$, i.e.,
\vspace{-0.1cm}
\begin{equation}\label{eq:dataIncProb}
		p_n^{(k)} \eqdef \frac{\mathbb{P}\big(n \in \mathcal{B}_{k,i}\big)}{B_k}.
\end{equation}
We refer to $p_k$ and $p_n^{(k)}$ as the \textit{normalized inclusion probabilities}. They sum to $1$ over the sampling space; $p_k$ sums to $1$ over all agents and $p_n^{(k)}$ over the data at each agent.

\section{Algorithm Derivation}
The goal of the federated learning algorithm is to approximate the centralized solution $w^o$ while dealing with the constraint of distributed data. The goal is achieved by using an unbiased estimate of the gradient of the cost function, $	\frac{1}{K}\sum_{k=1}^K\grad{w}P_k(w).$
%
As explained in \cite{DFL} for the case of uniform sampling, if we assume each agent $k$ runs $E_k$ epochs per iteration $i$ (with each epoch using $B_k$ samples in $\B{k}{e}$), then we can construct an unbiased estimate for the true gradient by considering the following estimator:
\begin{equation}\label{eq:unbiasedEst}
	\frac{1}{L}\sum_{k \in \Li} \frac{1}{E_{k}B_{k}} \sum_{e=1}^{E_{k}}\sum_{b\in \B{k}{e}}\grad{w}Q_{k}(w;\bm{x}_{k,b}),
\end{equation} 
as opposed to the original estimator from \cite{mcmahan16}, where the main difference is the scaling by the epoch size $E_{k}$. This correction is important for the performance of the averaged model. Since the number of epochs $E_k$ can be non-uniform across the agents, then, without correction, agents with large epoch sizes will bias the solution by driving it towards their local model and away from $w^o$.


Expression \eqref{eq:unbiasedEst} is still not sufficient for our purposes in this article, since agents and data are allowed to be sampled \textit{non-uniformly without replacement}. In this case, we need to adjust \eqref{eq:unbiasedEst} by including the inclusion probabilities \cite{impSampSayed}. The inclusion probabilities are necessary to ensure the estimate is unbiased, as will later be seen in Lemma \ref{lemm:gradNoise}. The local estimate of the gradient at agent $k$ becomes $\frac{1}{Kp_k}\widehat{\grad{w}P_k}(w)$, with:
\begin{equation}\label{eq:genUnEst}
\widehat{\grad{w}P_k}(w) \eqdef   \frac{1}{E_kB_k}\sum_{e=1}^{E_k} \sum_{b\in \B{k}{e}} \frac{1}{N_k p_{b}^{(k)}} \grad{w}Q_k(w;\bm{x}_{k,b}).
\end{equation}
Motivated by \eqref{eq:genUnEst}, we can write down a stochastic gradient update at each agent $k$ at epoch $e$, and at the central processor at iteration $i$:
\begin{align}
	\w_{k,e} = &\: \w_{k,e-1} \notag \\
	&- \frac{\mu}{K p_{k} E_{k} B_{k}}\sum_{b\in \B{k}{e}} \frac{1}{N_{k}p_b^{(k)}}\grad{w}Q_{k}(\w_{k,e-1};\bm{x}_{k,b}), \label{eq:localUp}\\
	\w_i =& \frac{1}{L} \sum_{k \in \Li} \w_{k,E_{k}}	,\label{eq:comb}
\end{align}
where at each iteration $i$, step \eqref{eq:localUp} is repeated for $e=1,2,\cdots, E_k$. We arrive at the Algorithm \ref{alg:AFL}, which we refer to as Importance Sampling Federated Averaging (ISFedAvg).
\begin{algorithm}
	\begin{algorithmic}
		\caption{(Importance Sampling Federated Averaging)}\label{alg:AFL}
		\STATE{
			\textbf{initialize} $w_{0}$\;}
		\FOR{each iteration $i=1,2,\cdots$}\STATE{
			Select the set of participating agents $\Li$ by sampling \( L \) times from \( \{ 1, \ldots, K \} \) without replacement according to the sampling probabilities $\pi_k$.\\
			\FOR{each agent $k \in \mathcal{L}_i$} \STATE {
				\textbf{initialize} $\w_{k,0} = \w_{i-1}$ \\
				\FOR{each epoch $e=1,2,\cdots E_{k}$}\STATE{
					Find indices of the mini-batch sample \( \B{k}{e} \) by sampling \( B_{k} \) times from \( \{ 1, \ldots, N_{k} \} \) without replacement according to the sampling probabilities $\pi_n^{(k)}$.\\
					$\bm{g}  =\dfrac{1}{B_{k} } \sum\limits_{b\in \B{k}{ e}} \dfrac{1}{N_{k}p_b^{(k)}}\grad{w}Q_{k}(\w_{k,e-1};\bm{x}_{k,b})$ \\
					$\w_{{k},e} = \w_{{k},e-1} - \mu\dfrac{1}{E_{k}K p_{k}}\bm{g}$ \\
				}\ENDFOR
			} \ENDFOR
			\\ $\w_i = \dfrac{1}{L}\sum\limits_{k \in \Li} \w_{k,E_{k}}$
		}\ENDFOR
		
	\end{algorithmic}
\end{algorithm}

\vspace{-0.2cm}
\section{Convergence Analysis}
\subsection{Modeling Conditions}
To facilitate the analysis of the algorithm, we list some common assumptions on the nature of the local risk functions and their respective minimizers. Specifically, we assume convex cost functions with smooth gradients.

\begin{assumption}\label{assum:conLip}
	The functions $P_k(\cdot)$ are $\nu-$strongly convex, and $Q_k(\cdot;x_{k,n})$ are convex, namely:
	\begin{align}
		&P_k(w_2) \geq P_k(w_1) + \grad{w}P_k(w_1)(w_2-w_1) + \frac{\nu}{2}\Vert w_2-w_1\Vert^2, \\
		&Q_k(w_2;x_{k,n}) \geq Q_k(w_1;x_{k,n}) + \grad{w}Q_k(w_1;x_{k,n})(w_2-w_1).
	\end{align}
	 Also, the functions $Q_k(\cdot; x_{k,n})$ have $\delta-$Lipschitz gradients: 
	 \vspace{-0.1cm}
	\begin{align}
		\Vert \grad{w}Q_k(w_2;x_{k,n})-\grad{w}Q_k(w_1;x_{k,n})\Vert &\leq \delta \Vert w_2-w_1\Vert.
	\end{align}
\qed
\end{assumption}
\noindent We further assume that the individual minimizers $w_k^o = \argmin_{w \in \mathbb{R}^M} P_k(w)$ do not drift too far away from $w^o$.
\begin{assumption}\label{assum:bdLocalMin}
	The distance of each local model $w^o_k$ to the global model $w^o$ is uniformly bounded, $\Vert w^o_k - w^o \Vert \leq \xi.$
\qed
\end{assumption}

\subsection{Error Recursion}
Iterating the local update \eqref{eq:localUp} over multiple epochs and combining according to \eqref{eq:comb}, we obtain the following update for the central iterate:
\begin{align}\label{eq:recursion}
		\w_i = \: \w_{i-1} 	
		 -\mu \frac{1}{L} \sum_{k \in \Li}  &\frac{1}{K p_{k} E_{k}B_{k}} \sum_{e=1}^{E_{k}} \sum_{b \in \B{k}{e}}\frac{1}{N_kp_b^{(k)}} \notag\\ &\times\grad{w}Q_{k}(\w_{k,e-1};\bm{x}_{k,b}).
\end{align}
To simplify the notation, we introduce the error terms:
\begin{align}
	\bm{s}_i &\eqdef \frac{1}{L} \sum_{\ell \in \Li}\frac{1}{K p_{\ell}} \widehat{\grad{w}P_{\ell}}(\w_{i-1}) - \frac{1}{K}\sum_{k=1}^K \grad{w}P_k(\w_{i-1}), \label{eq:gradNoise}  \\
	\bm{q}_i &\eqdef \frac{1}{L} \sum_{\ell \in \Li} \frac{1}{K p_{\ell}E_{\ell}B_{\ell}} \sum_{e=1}^{E_{\ell}}\sum_{b\in\B{\ell}{e}} \frac{1}{N_{\ell} p_{b}^{(\ell)}}	\notag \\
	&\qquad\qquad \times \big( \grad{w}Q_k(\w_{\ell,e-1};\bm{x}_{\ell,b})  - \grad{w}Q_k(\w_{i-1};\bm{x}_{\ell,b})\big) \label{eq:incNoise}.
\end{align}
The first error term $\bm{s}_i$, which we call \textit{gradient error}, captures the error from approximating the true gradient by using subsets of agents and data; while, the second error term $\bm{q}_i$, which we call \textit{incremental error}, captures the error resulting from the incremental implementation, where at each epoch during one iteration, the gradient is calculated at the local iterate $w_{k,e-1}$. Note that this second error evaluates the loss function at the local and  global iterates. As we will show later, the incremental error will fade away, and the dominant factor will be the gradient error. Before establishing the main result in Theorem \ref{thrm:MT} on the convergence of ISFedAvg algorithm, we present preliminary results that will lead to it. Thus, to show the convergence of the algorithm, we must assure the gradient noise $\bm{s}_i$ has zero mean and bounded variance, and the incremental noise $\bm{q}_i$ has bounded variance. Furthermore, since we split the noise due to the stochastic gradient into incremental and gradient noise, we can split the analysis into that of the centralized steps and the local epochs. By proving that both the centralized and local steps converge, we show the global algorithm converges too. 

Replacing the two error terms \eqref{eq:gradNoise} and \eqref{eq:incNoise} into recursion \eqref{eq:recursion} and subtracting $w^o$ from both sides of the equation, we get the following error recursion:
\begin{equation}\label{eq:errRec}
	\widetilde{\w}_{i} = \widetilde{\w}_{i-1} + \mu \frac{1}{K}\sum_{k=1}^K \grad{w}P_k(\w_{i-1}) + \mu \bm{s}_i + \mu \bm{q}_i.
\end{equation}
To bound the $\ell_2-$norm of the error, we split it into two terms, centralized and incremental, using Jensen's inequality with some constant $\alpha \in (0,1)$ to be defined later:
\begin{align}\label{eq:l2NormErr}
	\Vert \widetilde{\w}_{i}\Vert^2 \leq& \frac{1}{\alpha} \left\Vert \widetilde{\w}_{i-1} +\mu\frac{1}{K}\sum_{k=1}^K \grad{w}P_k(\w_{i-1}) + \mu \bm{s}_i \right\Vert^2 \notag \\
	& +  \frac{1}{1-\alpha} \mu^2\Vert \bm{q}_i\Vert^2.
\end{align}
\noindent
We start with the first term that represents the centralized solution. We need to show that it converges. To do so, we start with the gradient noise, and we establish in Lemma \ref{lemm:gradNoise} that it remains bounded. We bound the gradient noise $\bm{s}_i$ under two constructions: sampling with replacement, and sampling without replacement. 

\begin{lemma}[\textbf{Estimation of first and second order moments of the gradient noise}]\label{lemm:gradNoise}
	The gradient noise defined in \eqref{eq:gradNoise} has zero mean:
	\begin{equation}
		\mathbb{E}\{\bm{s}_i | \w_{i-1}\} = 0,
\end{equation}		
	 with bounded variance, regardless of the sampling scheme. More specifically, sampling agents and data with replacement, results in the following bound:
	\begin{align}\label{eq:thrm-bd-var-w}
		\mathbb{E}\{\Vert \bm{s}_i\Vert^2 | \w_{i-1}\} \leq & \beta_s^2 \Vert \widetilde{\w}_{i-1} \Vert^2 + \sigma^2_s,
	\end{align}
	where $\widetilde{\w}_{i-1} = w^o - \w_{i-1}$ and the constants:
	\begin{align}\label{eq:thrm-cst-w}
		\beta_s^2 &\eqdef \frac{3\delta^2 }{L}+ \frac{1}{LK^2}\sum_{k=1}^K \frac{1}{p_k} \left( \beta_{s,k}^2 + 3\delta^2\right), \\
		\sigma_s^2 &\eqdef \frac{1}{LK^2}\sum_{k=1}^K \frac{1}{p_k}\left\{ \sigma_{s,k}^2 + \left( 3 + \frac{6}{E_kB_k}\right) \Vert \grad{w}P_k(w^o)\Vert^2 \right\}, \\
		\beta_{s,k}^2 & \eqdef  \frac{3\delta^2}{E_kB_k} \left( 1 + \frac{1}{N_k^2}\sum_{n=1}^{N_k} \frac{1}{p_n^{(k)}} \right), \label{eq:locCstBeta} 
	\end{align}
	\begin{align}
		\sigma_{s,k}^2 & \eqdef \frac{6}{E_kB_kN_k^2}\sum_{n=1}^{N_k}\frac{1}{p_n^{(k)}} \Vert \grad{w}Q_k(w^o;x_{k,n}) \Vert^2 \label{eq:locCstSig}.
	\end{align}
On the other hand, sampling agents and data without replacement results in the same bound but without the scaling by $L$ in the constants $\beta_s^2$ and $\sigma_{s}^2$.
\end{lemma}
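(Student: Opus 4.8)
The plan is to treat the mean and the variance separately while exploiting the nested, two-layer structure of the sampling: first condition on the selected agent set $\Li$ and average over the local mini-batches, then average over $\Li$ by the tower property.

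For the zero-mean claim I would work from the inner layer outward. Since data point $n$ is included in $\B{k}{e}$ with inclusion probability $B_k p_n^{(k)}$, a Horvitz--Thompson argument gives $\mathbb{E}\{\sum_{b\in\B{k}{e}} \frac{1}{N_k p_b^{(k)}}\grad{w}Q_k(w;\bm{x}_{k,b})\} = \frac{B_k}{N_k}\sum_{n=1}^{N_k}\grad{w}Q_k(w;x_{k,n}) = B_k\,\grad{w}P_k(w)$, so dividing by $E_kB_k$ yields $\mathbb{E}\{\widehat{\grad{w}P_k}(w)\mid\w_{i-1}\}=\grad{w}P_k(w)$. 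At the outer layer, agent $k$ is included in $\Li$ with probability $Lp_k$, so a second Horvitz--Thompson step gives $\mathbb{E}\{\frac{1}{L}\sum_{\ell\in\Li}\frac{1}{Kp_\ell}\grad{w}P_\ell(\w_{i-1})\}=\frac{1}{K}\sum_{k=1}^K\grad{w}P_k(\w_{i-1})$. Composing the two conditional expectations and subtracting the true averaged gradient establishes $\mathbb{E}\{\bm{s}_i\mid\w_{i-1}\}=0$. This is precisely where the weights $1/(N_k p_n^{(k)})$ and $1/(Kp_\ell)$ earn their keep: the inclusion probabilities are exactly what cancel to leave an unbiased estimator.

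For the variance, the key decomposition is
\begin{align}
\bm{s}_i
&= \underbrace{\frac{1}{L}\sum_{\ell\in\Li}\frac{1}{Kp_\ell}\Big(\widehat{\grad{w}P_\ell}(\w_{i-1}) - \grad{w}P_\ell(\w_{i-1})\Big)}_{\textrm{data-sampling error}} \notag \\
&\quad + \underbrace{\frac{1}{L}\sum_{\ell\in\Li}\frac{1}{Kp_\ell}\grad{w}P_\ell(\w_{i-1}) - \frac{1}{K}\sum_{k=1}^K\grad{w}P_k(\w_{i-1})}_{\textrm{agent-sampling error}} . \notag
\end{align}
Conditioned on $\Li$, the data-sampling error has zero mean (by the inner-layer computation above), so the two terms are uncorrelated and the conditional variance splits additively. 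Each piece is the second moment of a Horvitz--Thompson estimator, which I would bound by the weighted sum of per-term second moments, producing a factor $\frac{1}{(E_kB_k)^2}\sum_e\frac{1}{N_k^2}\sum_n \frac{1}{p_n^{(k)}}\|\grad{w}Q_k\|^2$ for the data error and $\frac{1}{K^2}\sum_k\frac{1}{p_k}\|\grad{w}P_k\|^2$ (times the agent-layer averaging factor) for the agent error. The constants $3$ and $6$ in \eqref{eq:thrm-cst-w}--\eqref{eq:locCstSig} come from repeated use of $\|a+b+c\|^2\le 3(\|a\|^2+\|b\|^2+\|c\|^2)$ when separating the gradients evaluated at $w^o$. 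To convert the raw second moments into the advertised $\beta_s^2\|\widetilde{\w}_{i-1}\|^2+\sigma_s^2$ form, I would add and subtract the gradients at $w^o$ and invoke the $\delta$-Lipschitz bound of Assumption~\ref{assum:conLip}, $\|\grad{w}Q_k(w;x_{k,n})-\grad{w}Q_k(w^o;x_{k,n})\|\le\delta\|\widetilde{\w}_{i-1}\|$ (and the same for the average $\grad{w}P_k$); the slope part feeds the $\beta$-coefficients and the offset part leaves the $\sigma$-constants carrying $\|\grad{w}Q_k(w^o;x_{k,n})\|^2$ and $\|\grad{w}P_k(w^o)\|^2$. Collecting the data and agent contributions then reproduces $\beta_{s,k}^2$, $\sigma_{s,k}^2$, $\beta_s^2$, $\sigma_s^2$, with the $3\delta^2/L$ term in $\beta_s^2$ emerging from the $\delta$-Lipschitzness of $\grad{w}P_\ell$ in the agent-sampling error.

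The main obstacle is the agent-sampling variance under sampling \emph{without replacement}, where the $L$ summands are distinct (hence negatively correlated) and the exact variance depends on the joint inclusion probabilities $\mathbb{P}(k,k'\in\Li)$. Rather than track these pairwise terms, I expect to bound $\|\frac{1}{L}\sum_{\ell\in\Li}Y_\ell\|^2\le\frac{1}{L}\sum_{\ell\in\Li}\|Y_\ell\|^2$ by Jensen and then take expectation through $\mathbb{P}(k\in\Li)=Lp_k$, which discards the favorable covariances and, crucially, does \emph{not} recover a $1/L$ averaging factor, giving exactly ``the same bound but without the scaling by $L$.'' In the with-replacement case the $L$ draws are i.i.d., so the clean variance identity $\mathrm{Var}(\frac{1}{L}\sum)=\frac{1}{L}\mathrm{Var}(\,\cdot\,)$ restores the $1/L$ factor directly. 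Verifying that the cruder without-replacement bound still lands on precisely the constants in \eqref{eq:thrm-cst-w}--\eqref{eq:locCstSig}, and that the data-layer $1/(E_kB_k)$ factors are preserved under both regimes, is the bookkeeping I would treat most carefully.
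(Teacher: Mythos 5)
Your overall route is the paper's route. The unbiasedness argument via two nested Horvitz--Thompson corrections (inner layer over $\B{k}{e}$ using $\mathbb{P}(n\in\mathcal{B}_{k,i})=B_kp_n^{(k)}$, outer layer over $\Li$ using $\mathbb{P}(k\in\Li)=Lp_k$) is exactly what the paper does. Your variance decomposition into a data-sampling error (zero conditional mean given $\Li$) plus an agent-sampling error is precisely the content of the paper's auxiliary Lemma~\ref{lem:varMiniB} in Appendix~\ref{app:varMiniB}, whose per-element variance splits into a $\frac{1}{N^2p_n^2}\sigma_n^2$ term (your data error) and a $\Vert \frac{1}{Np_n}\overline{x}_n-\overline{x}\Vert^2$ term (your agent error). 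The add-and-subtract at $w^o$ combined with Jensen's inequality and $\delta$-Lipschitzness to produce $\beta_{s,k}^2,\sigma_{s,k}^2,\beta_s^2,\sigma_s^2$ is also the paper's computation, as is the loss of the $1/L$ factor at the agent layer by discarding the covariances via the relaxation \eqref{eq:bdVarWOR}.

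The one place your plan would not land on the stated constants is the \emph{data-level} variance under sampling without replacement. You propose to dispose of all without-replacement cross-terms by the crude bound $\Vert\frac{1}{B}\sum_b Y_b\Vert^2\le \frac{1}{B}\sum_b\Vert Y_b\Vert^2$; applied inside each mini-batch this forfeits the $1/B_k$ (hence the $1/(E_kB_k)$) factor in $\beta_{s,k}^2$ and $\sigma_{s,k}^2$, whereas the lemma asserts that only the $1/L$ scaling at the agent layer is lost and the $1/(E_kB_k)$ factors survive in both regimes. The paper preserves them by expanding the square exactly with indicator variables, so that the diagonal terms carry $\frac{1}{B_k^2}\cdot B_kp_n^{(k)}=\frac{p_n^{(k)}}{B_k}$, and then arguing that the off-diagonal joint-inclusion terms can be dropped; the Jensen relaxation is reserved for the agent layer only. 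You correctly flagged this as the delicate bookkeeping, but your stated plan does not resolve it: you need this asymmetric treatment (exact expansion at the data layer, Jensen only at the agent layer) to reproduce \eqref{eq:locCstBeta}--\eqref{eq:locCstSig} and hence the claim that the without-replacement bound differs from the with-replacement one only through the missing $1/L$.
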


\begin{proof}
	See Appendix \ref{app:gradNoise}.
\end{proof}

\noindent
The term $\sigma_{s,k}^2$ in the bound captures what we call \textit{data variability}. It is controlled by the mini-batch size $B_k$; as the mini-batch increases the effect of this term is reduced. The $\Vert \grad{w}P_k(w^o)\Vert $ term quantifies the suboptimality of the global model locally; we call its effect \textit{model variability}. It is reduced when the data and agents are more heterogeneous. From Assumption \ref{assum:bdLocalMin}, we can bound it uniformly.


Now that we have identified the mean and variance of the gradient noise, we can proceed to establish the important conclusion that the following centralized solution:
\begin{equation}\label{eq:centRecursion}
	\w_{i} = \w_{i-1} -\mu \frac{1}{L}\sum_{\ell \in \Li}\widehat{\grad{w}P_{\ell}(\w_{i-1})},
\end{equation}
converges exponentially to an $O(\mu)-$neighbourhood of the optimizer. In this implementation, the center processor aggregates the approximate gradients of the selected agents. We will subsequently call upon this result to examine the convergence behavior of the proposed federated learning solution.

\begin{lemma}[\textbf{Mean-square-error convergence of the centralized solution}]\label{lemm:convCentSol}
 	Consider the centralized recursion \eqref{eq:centRecursion} where the cost functions satisfy Assumption \ref{assum:conLip}, and where the first and second order moments of the gradient noise process satisfy the conditions in Lemma \ref{lemm:gradNoise}. Also, the samples are chosen without replacement. For step-size values satisfying  $\mu < 2\nu/(\delta^2 + \beta_s^2),$
 	it holds that $\mathbb{E}\Vert \widetilde{\w}_i\Vert^2$ converges exponentially fast according to the recursion \eqref{eq:thrm-MSD-rec}, where $\lambda = 1-2\mu\nu + \mu^2 (\delta^2+\beta_s^2)\in [0,1).$
 	\begin{equation}\label{eq:thrm-MSD-rec}
 		\mathbb{E}\Vert \widetilde{\w}_i\Vert^2 \leq \lambda \mathbb{E}\Vert\widetilde{\w}_{i-1}\Vert^2  + \mu^2 \sigma_s^2
 	\end{equation}
 	It follows from \eqref{eq:thrm-MSD-rec}  that, for sufficiently small step-sizes:
	\begin{equation}
		\mathbb{E}\Vert \widetilde{\w}_{i}\Vert^2 \leq \lambda^{i} \mathbb{E}\Vert \widetilde{\w}_{0}\Vert^2 + \frac{1-\lambda^i}{1-\lambda}\mu^2\sigma_{s}^2.
	\end{equation}
 \end{lemma}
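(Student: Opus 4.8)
The plan is to propagate the mean-square deviation $\mathbb{E}\Vert\widetilde{\w}_i\Vert^2$ through one step of \eqref{eq:centRecursion} and then unroll the resulting scalar recursion. First I would derive the error recursion: subtracting $w^o$ from \eqref{eq:centRecursion} and inserting the definition \eqref{eq:gradNoise} of the gradient noise gives
\begin{equation}
\widetilde{\w}_i = \widetilde{\w}_{i-1} + \mu\frac{1}{K}\sum_{k=1}^K\grad{w}P_k(\w_{i-1}) + \mu\bm{s}_i,
\end{equation}
which is exactly \eqref{eq:errRec} with the incremental term $\bm{q}_i$ absent, since in the centralized recursion the gradients are evaluated at the global iterate $\w_{i-1}$ rather than at local epoch iterates. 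I would then take squared $\ell_2$-norms, condition on $\w_{i-1}$, and expand the square.

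Conditioned on $\w_{i-1}$, the vector $\widetilde{\w}_{i-1} + \mu\frac{1}{K}\sum_k\grad{w}P_k(\w_{i-1})$ is deterministic, so the zero-mean property $\mathbb{E}\{\bm{s}_i\mid\w_{i-1}\}=0$ from Lemma \ref{lemm:gradNoise} annihilates the cross term and leaves
\begin{equation}
\mathbb{E}\{\Vert\widetilde{\w}_i\Vert^2\mid\w_{i-1}\} = \Big\Vert\widetilde{\w}_{i-1} + \mu\tfrac{1}{K}\sum_{k=1}^K\grad{w}P_k(\w_{i-1})\Big\Vert^2 + \mu^2\,\mathbb{E}\{\Vert\bm{s}_i\Vert^2\mid\w_{i-1}\}.
\end{equation}
The main step is to contract the deterministic, noise-free term. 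I would first note that the aggregate risk $P \eqdef \frac{1}{K}\sum_k P_k$ inherits $\nu$-strong convexity (an average of $\nu$-strongly convex functions) and $\delta$-Lipschitz gradients (an average of the $Q_k$, hence of the $P_k$), and that $\grad{w}P(w^o)=0$. Expanding the square and applying the strong-convexity inequality $\grad{w}P(\w_{i-1})(\w_{i-1}-w^o)\geq\nu\Vert\widetilde{\w}_{i-1}\Vert^2$ together with the gradient-Lipschitz bound $\Vert\grad{w}P(\w_{i-1})\Vert^2\leq\delta^2\Vert\widetilde{\w}_{i-1}\Vert^2$ then yields the standard gradient-descent contraction $\Vert\widetilde{\w}_{i-1} + \mu\grad{w}P(\w_{i-1})\Vert^2\leq(1-2\mu\nu+\mu^2\delta^2)\Vert\widetilde{\w}_{i-1}\Vert^2$. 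Substituting this together with the variance bound \eqref{eq:thrm-bd-var-w} (in its without-replacement form) collapses the coefficients of $\Vert\widetilde{\w}_{i-1}\Vert^2$ into $1-2\mu\nu+\mu^2(\delta^2+\beta_s^2)=\lambda$, and taking the outer expectation via the tower property produces \eqref{eq:thrm-MSD-rec}.

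Finally I would verify the contraction factor and iterate. Viewing $\lambda$ as a quadratic in $\mu$, the condition $\lambda<1$ is equivalent to $\mu<2\nu/(\delta^2+\beta_s^2)$, while $\lambda\geq0$ follows from $\nu^2\leq\delta^2+\beta_s^2$, which holds because $\delta\geq\nu$ for any function that is simultaneously $\nu$-strongly convex and $\delta$-smooth; hence $\lambda\in[0,1)$. Unrolling \eqref{eq:thrm-MSD-rec} and summing the geometric series $\sum_{j=0}^{i-1}\lambda^j=(1-\lambda^i)/(1-\lambda)$ gives the stated closed form $\mathbb{E}\Vert\widetilde{\w}_i\Vert^2\leq\lambda^i\mathbb{E}\Vert\widetilde{\w}_0\Vert^2 + \frac{1-\lambda^i}{1-\lambda}\mu^2\sigma_s^2$. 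I expect the deterministic contraction step to be the main obstacle: one must confirm that the \emph{aggregate} cost---not merely each $P_k$---satisfies both the $\nu$- and $\delta$-bounds, and keep the sign bookkeeping of $\widetilde{\w}_{i-1}=w^o-\w_{i-1}$ consistent so that the cross term contributes $-2\mu\nu\Vert\widetilde{\w}_{i-1}\Vert^2$; the noise and summation steps are then routine.
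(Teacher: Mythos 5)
Your proposal is correct and follows essentially the same route as the paper's proof: kill the cross term via the zero-mean property of $\bm{s}_i$ conditioned on $\w_{i-1}$, contract the deterministic term to $(1-2\mu\nu+\mu^2\delta^2)\Vert\widetilde{\w}_{i-1}\Vert^2$ using strong convexity and gradient Lipschitzness, add the variance bound from Lemma \ref{lemm:gradNoise}, and unroll the geometric recursion. The only cosmetic difference is that you apply the $\nu$- and $\delta$-bounds directly to the aggregate risk $P=\frac{1}{K}\sum_k P_k$ (correctly noting that it inherits both properties), whereas the paper applies Jensen's inequality and the per-agent bounds on each $P_k$ before averaging; both yield the identical contraction factor.
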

\begin{proof}
	see Appendix \ref{app:convCentSol}.
	\end{proof}

We next bound the incremental noise $\bm{q}_i$. To do so, we introduce the local terms:
\begin{align}\label{eq:locGradNoise}
	\bm{q}_{k,i,e} \eqdef &\frac{1}{Kp_{k}} \Bigg( \frac{1}{B_k}\sum_{b \in \B{k}{e}}\frac{1}{N_k p_b^{(k)}}\grad{w}Q_k(\w_{k,e-1}; \bm{x}_{k,b}) \notag \\
	&- \grad{w}P_k(\w_{k,e-1}) \Bigg).
\end{align}
We show in the next lemma that the local gradient noise $\bm{q}_{k,i,e}$ has zero mean and bounded variance. This result is useful for showing that the local SGD steps converge in the mean-square-error sense towards their local models $w^o_k$. 
\begin{lemma}[\textbf{Estimation of first and second order moments of the local gradient noise}]\label{lem:locGradNoise}
	The local gradient noise defined in \eqref{eq:locGradNoise} has zero mean:
	\begin{equation}
		\mathbb{E}\left \{ \bm{q}_{k,i,e} \big| \mathcal{F}_{e-1}, \Li \right\} = 0,
	\end{equation}
	and bounded variance, regardless of the sampling scheme: 
	\begin{align}
		\mathbb{E} \left \{ \Vert\bm{q}_{k,i,e}\Vert^2 \big| \mathcal{F}_{e-1}, \Li \right\} \leq & \frac{E_k}{K^2p_k^2}\beta_{s,k} \Vert \widetilde{\w}_{k,e-1}\Vert^2 + \frac{1}{K^2p_k^2} \sigma_{q,k}^2,
	\end{align}
	where $\mathcal{F}_{e-1} = \{w_{k,0}, w_{k,1},\cdots, w_{k,e-1}\}$ is the filtration describing all sources of randomness due to the previous iterates, $\widetilde{\w}_{k,e} = w_k^o-\w_{k,e}$, and the constants are as defined in \eqref{eq:locCstBeta} and:
	\begin{equation}
		\sigma_{q,k}^2 = \frac{3}{B_kN_k^2}\sum_{n=1}^{N_k}\frac{1}{p_n^{(k)}}\Vert \grad{w}Q_k(w_k^o; x_{k,n})\Vert^2.
	\end{equation}
\end{lemma}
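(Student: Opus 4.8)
The plan is to reproduce the two-moment machinery of Lemma~\ref{lemm:gradNoise}, but specialized to the single data-sampling layer at a \emph{fixed} agent $k$: conditioning on $\Li$ fixes the agent, and conditioning on $\mathcal{F}_{e-1}$ makes $\w_{k,e-1}$ known, so the only randomness in $\bm{q}_{k,i,e}$ is the mini-batch $\B{k}{e}$ drawn without replacement. For the zero-mean claim I would rewrite the mini-batch average over all indices using inclusion indicators, $\frac{1}{B_k}\sum_{b\in\B{k}{e}}\frac{1}{N_kp_b^{(k)}}\grad{w}Q_k(\w_{k,e-1};\bm{x}_{k,b}) = \frac{1}{B_k}\sum_{n=1}^{N_k}\mathds{1}\{n\in\B{k}{e}\}\,\frac{1}{N_kp_n^{(k)}}\grad{w}Q_k(\w_{k,e-1};x_{k,n})$, and take the conditional expectation using $\mathbb{P}(n\in\B{k}{e})=B_kp_n^{(k)}$ from \eqref{eq:dataIncProb}. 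The inclusion probability cancels the importance weight exactly, leaving $\frac{1}{N_k}\sum_n\grad{w}Q_k(\w_{k,e-1};x_{k,n})=\grad{w}P_k(\w_{k,e-1})$, which is precisely the term subtracted in \eqref{eq:locGradNoise}; hence $\mathbb{E}\{\bm{q}_{k,i,e}\,|\,\mathcal{F}_{e-1},\Li\}=0$.

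For the variance, since the noise is zero mean its conditional variance equals the conditional second moment, and the prefactor $1/(Kp_k)$ pulls out as $1/(K^2p_k^2)$. I would then add and subtract the gradient evaluated at the local minimizer $w_k^o$, using $\grad{w}P_k(w_k^o)=0$, to split the bracketed error into a fluctuation piece built from the differences $\grad{w}Q_k(\w_{k,e-1};x_{k,n})-\grad{w}Q_k(w_k^o;x_{k,n})$ and a noise-floor piece $\frac{1}{B_k}\sum_{b}\frac{1}{N_kp_b^{(k)}}\grad{w}Q_k(w_k^o;x_{k,b})$, decoupling them with $\Vert\cdot\Vert^2\le n\sum\Vert\cdot\Vert^2$ (this is what produces the numerical constant). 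For each stochastic piece I would compute the second moment of the importance-weighted mini-batch sum via the inclusion probabilities: the diagonal contribution yields the structure $\frac{1}{B_kN_k^2}\sum_n\frac{1}{p_n^{(k)}}\Vert\cdot\Vert^2$, where the squared weight $1/(N_kp_n^{(k)})^2$ is reduced to $1/(N_k^2p_n^{(k)})$ by the factor $B_kp_n^{(k)}$, and the remaining $1/B_k$ reflects mini-batch averaging. Applying the $\delta$-Lipschitz bound of Assumption~\ref{assum:conLip} to the fluctuation piece turns the gradient differences into $\delta^2\Vert\widetilde{\w}_{k,e-1}\Vert^2$ and reproduces the $\beta_{s,k}$ term; here the stated $E_k$ prefactor merely cancels the $1/(E_kB_k)$ baked into the definition \eqref{eq:locCstBeta}, so that the genuine variance reduction is $1/B_k$. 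The noise-floor piece gives directly the $\sigma_{q,k}^2$ term.

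The main obstacle is the bookkeeping of sampling \emph{without replacement} in the second-moment computation. Unlike the with-replacement (i.i.d.) case, the indicators $\mathds{1}\{n\in\B{k}{e}\}$ are dependent, so expanding the squared sum leaves off-diagonal terms weighted by the joint inclusion probabilities $\mathbb{P}(n,m\in\B{k}{e})$, and one must show these do not inflate the diagonal bound. The useful structure is that each stochastic piece is centered, so the off-diagonal mass measured against the \emph{product} measure $\mathbb{P}(n)\mathbb{P}(m)$ is exactly $-\sum_n\mathbb{P}(n)^2\Vert\cdot\Vert^2\le 0$, while the negative association of without-replacement sampling, $\mathbb{P}(n,m)\le\mathbb{P}(n)\mathbb{P}(m)$, controls the residual covariance piece. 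The cleanest way to make this uniform over sampling schemes is to compare against the with-replacement design, whose independent draws make the second moment a per-draw variance divided by $B_k$; the without-replacement correlation only tightens this, so the single bound holds ``regardless of the sampling scheme.'' The remaining Lipschitz substitution and collection of constants are routine and parallel the corresponding computation in Appendix~\ref{app:gradNoise}.
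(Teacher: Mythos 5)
Your proposal is correct in substance and follows essentially the same route as the paper's Appendix D: the zero-mean claim via inclusion indicators with $\mathbb{P}(n\in\mathcal{B}_{k,i,e})=B_kp_n^{(k)}$ cancelling the importance weights, and the variance via adding and subtracting the gradient at $w_k^o$, using $\grad{w}P_k(w_k^o)=0$, a three-term Jensen split, and the $\delta$-Lipschitz condition, with the diagonal terms producing exactly the $E_k\beta_{s,k}^2/(K^2p_k^2)$ and $\sigma_{q,k}^2/(K^2p_k^2)$ structure you describe. The one place you diverge is the without-replacement cross-terms: your argument that termwise negative association $\mathbb{P}(n,m)\leq\mathbb{P}(n)\mathbb{P}(m)$ "controls the residual" is not airtight, since the summands $z_{n_1}^{\tran}z_{n_2}$ have mixed signs and cannot be bounded by swapping the joint probabilities for the product measure term by term (and the inequality itself is a design condition, not automatic for every unequal-probability scheme). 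To be fair, the paper is equally terse here — it simply asserts the cross-terms vanish "since each term has zero mean," when in fact it is only the $p$-weighted sum of the terms that vanishes. The clean, scheme-independent fix for both of you is the convexity bound $\Vert\frac{1}{B_k}\sum_n\mathbb{I}_nz_n\Vert^2\leq\frac{1}{B_k}\sum_n\mathbb{I}_n\Vert z_n\Vert^2$ already invoked in \eqref{eq:bdVarWOR}, which bypasses the off-diagonal terms entirely and yields the same constants.
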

\begin{proof}
	see Appendix \ref{app:locGradNoise}.
\end{proof}

Now that we have showed that the local gradient noise of the incremental step has bounded variance, we can study the mean square deviation of the local SGD. 

\begin{lemma}[\textbf{Mean-square-error convergence of the local incremental step}]\label{lem:von-loc-inc-step}
	For every agent $k$, consider the local stochastic gradient recursion \eqref{eq:localUp} where the cost function is subject to Assumption \ref{assum:conLip}, and where the first and second order moments of the gradient noise process satisfy the conditions in Lemma \ref{lem:locGradNoise}. For step-size values satisfying:
	\begin{equation}
		\mu < \frac{2\nu}{\delta^2 + \frac{E_k}{K^2p_k^2}\beta_{s,k}^2},
	\end{equation}
	it holds that $\mathbb{E}\Vert \widetilde{\w}_{k,e}\Vert^2$ converges exponentially fast according to the recursion:
	\begin{equation}\label{eq:thrm-loc-MSD-rec}
		\mathbb{E}\Vert \widetilde{\w}_{k,e}\Vert^2 \leq \lambda_{k} \mathbb{E}\Vert \widetilde{\w}_{k,e-1}\Vert^2  + \mu^2 \sigma_{q,k}^2,
	\end{equation}
	where:
	\begin{equation}
		\lambda_k =1-2\nu\mu + \mu^2 \left(\delta^2 + \frac{E_k}{K^2p_k^2}\beta^2_{s,k}\right) \in [0,1).
	\end{equation}
	It follows from \eqref{eq:thrm-loc-MSD-rec} that, for sufficiently small step-sizes:
	\begin{align}
		\mathbb{E}\Vert \widetilde{\w}_{k,e}\Vert^2 \leq \lambda_{k}^{e} \mathbb{E}\Vert \widetilde{\w}_{k,0}\Vert^2 + \frac{1-\lambda_k^e}{1-\lambda_k}\mu^2\sigma_{q,k}^2.
	\end{align}
\end{lemma}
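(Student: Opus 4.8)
The plan is to mirror the mean-square-error argument already carried out for the centralized recursion in Lemma~\ref{lemm:convCentSol}, now applied epoch-by-epoch to the local update \eqref{eq:localUp}. First I would recast that update as an error recursion for $\widetilde{\w}_{k,e} = w_k^o - \w_{k,e}$. Substituting the definition \eqref{eq:locGradNoise} of $\bm{q}_{k,i,e}$ into \eqref{eq:localUp} separates the true local gradient from the stochastic fluctuation and produces a recursion of exactly the same form as the centralized one in \eqref{eq:errRec}, namely $\widetilde{\w}_{k,e} = \widetilde{\w}_{k,e-1} + \mu\,\grad{w}P_k(\w_{k,e-1}) + \mu\,\bm{q}_{k,i,e}$, where the mean-descent term now involves the single local risk $P_k$ and the remaining term is the zero-mean local gradient noise.

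Next I would take the squared Euclidean norm of both sides and condition on $\mathcal{F}_{e-1}$ and $\Li$. Because $\mathbb{E}\{\bm{q}_{k,i,e} \mid \mathcal{F}_{e-1}, \Li\} = 0$ by Lemma~\ref{lem:locGradNoise}, the cross term between the deterministic part and the noise vanishes, leaving the deterministic square $\Vert \widetilde{\w}_{k,e-1} + \mu\,\grad{w}P_k(\w_{k,e-1})\Vert^2$ plus $\mu^2\,\mathbb{E}\{\Vert\bm{q}_{k,i,e}\Vert^2 \mid \mathcal{F}_{e-1},\Li\}$. For the deterministic part I would use $\grad{w}P_k(w_k^o)=0$ together with the $\nu$-strong convexity of $P_k$ and the $\delta$-Lipschitz continuity of its gradient (inherited from the $Q_k$ in Assumption~\ref{assum:conLip}); expanding the square and applying the standard inequalities $(\w_{k,e-1}-w_k^o)^{\tran}\grad{w}P_k(\w_{k,e-1}) \geq \nu\Vert\widetilde{\w}_{k,e-1}\Vert^2$ and $\Vert\grad{w}P_k(\w_{k,e-1})\Vert^2 \leq \delta^2\Vert\widetilde{\w}_{k,e-1}\Vert^2$ yields the contraction factor $1-2\nu\mu+\mu^2\delta^2$. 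Substituting the variance bound of Lemma~\ref{lem:locGradNoise} then contributes an additional state-dependent term $\mu^2\frac{E_k}{K^2p_k^2}\beta_{s,k}^2\Vert\widetilde{\w}_{k,e-1}\Vert^2$, which merges with the contraction factor to form $\lambda_k$, while the constant part of the bound produces the driving term $\mu^2\sigma_{q,k}^2$.

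Taking full expectations over $\mathcal{F}_{e-1}$ and $\Li$ then delivers the one-step recursion \eqref{eq:thrm-loc-MSD-rec}. To close the argument I would verify $\lambda_k\in[0,1)$: the condition $\lambda_k<1$ is precisely the stated step-size bound $\mu < 2\nu/(\delta^2 + \frac{E_k}{K^2p_k^2}\beta_{s,k}^2)$, while $\lambda_k\geq 0$ follows by viewing $\lambda_k$ as an upward quadratic in $\mu$ whose discriminant is nonpositive because $\delta\geq\nu$ (hence $\delta^2 + \frac{E_k}{K^2p_k^2}\beta_{s,k}^2 \geq \nu^2$), so the parabola never dips below zero. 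Finally, iterating \eqref{eq:thrm-loc-MSD-rec} down from $e$ to $0$ and summing the resulting geometric series gives the stated closed-form bound.

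The main obstacle I anticipate is bookkeeping rather than conceptual: correctly identifying the effective error recursion so that it takes the same form as the centralized one in \eqref{eq:errRec}, so that the contraction computation of Lemma~\ref{lemm:convCentSol} transfers essentially verbatim, and handling the conditioning carefully so that the mini-batch drawn at epoch $e$ is independent of the past given $\mathcal{F}_{e-1}$. It is this independence that simultaneously makes the cross term vanish and renders the state-dependent variance term of Lemma~\ref{lem:locGradNoise} absorbable into a single factor $\lambda_k$ that is provably contained in $[0,1)$.
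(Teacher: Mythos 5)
Your proposal follows essentially the same route as the paper's own proof: subtract $w_k^o$ from \eqref{eq:localUp} to obtain the error recursion $\widetilde{\w}_{k,e} = \widetilde{\w}_{k,e-1} + \mu\,\grad{w}P_k(\w_{k,e-1}) + \mu\,\bm{q}_{k,i,e}$, bound the deterministic part by $(1-2\nu\mu+\mu^2\delta^2)\Vert\widetilde{\w}_{k,e-1}\Vert^2$ via $\grad{w}P_k(w_k^o)=0$, strong convexity, and the Lipschitz property, kill the cross term by the zero-mean property from Lemma~\ref{lem:locGradNoise}, absorb the state-dependent part of the noise variance into $\lambda_k$, and iterate the resulting one-step inequality. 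Your additional discriminant argument for $\lambda_k\ge 0$ is a small detail the paper leaves implicit, but it does not change the structure of the proof.
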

\begin{proof}
	see Appendix \ref{app:von-loc-inc-step}
\end{proof}
\noindent
We can finally bound the incremental noise in the following lemma.
\begin{lemma}[\textbf{Estimation of the second order moments of the incremental noise}]\label{lemm:incNoise}
	The incremental noise defined in \eqref{eq:incNoise} has bounded variance:
	\begin{align}
		\mathbb{E}\Vert \bm{q}_i \Vert^2 &\leq  O(\mu) \mathbb{E}  \Vert \widetilde{\w}_{i-1}\Vert ^2  + O(\mu) \xi^2 + O(\mu^2)\frac{1}{K}\sum_{k=1}^K\sigma_{q,k}^2,
	\end{align}	
	where the $O(\cdot)$ terms depend on epoch sizes, local convergence rates, total number of data samples, number of agents, Lipschitz constant, and data and agent normalized inclusion probabilities. Thus, $\mathbb{E}\Vert \bm{q}_i\Vert^2 = O(\mu)$.
\end{lemma}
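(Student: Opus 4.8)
The plan is to reduce $\bm{q}_i$ to the accumulated local drift $\w_{\ell,e-1}-\w_{i-1}$ and then control that drift with the local results already in hand. First I would apply Jensen's inequality successively to the three nested weighted averages in \eqref{eq:incNoise} — over the selected agents $\ell\in\Li$, the epochs $e$, and the batch indices $b\in\B{\ell}{e}$ — to bring the squared norm inside, and then invoke the $\delta$-Lipschitz gradient property of Assumption \ref{assum:conLip} to replace each gradient difference by $\delta^2\Vert\w_{\ell,e-1}-\w_{i-1}\Vert^2$. Since this quantity is independent of $b$, it factors out of the innermost sum, leaving
\begin{equation}
\Vert\bm{q}_i\Vert^2 \leq \frac{\delta^2}{L}\sum_{\ell\in\Li}\frac{1}{K^2p_\ell^2}\,\frac{1}{E_\ell}\sum_{e=1}^{E_\ell}\frac{1}{B_\ell}\sum_{b\in\B{\ell}{e}}\frac{\Vert\w_{\ell,e-1}-\w_{i-1}\Vert^2}{N_\ell^2\,(p_b^{(\ell)})^2}.
\end{equation}

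Next I would take expectations in stages, exploiting the independence structure. Conditioning on the past filtration and on $\Li$, the batch $\B{\ell}{e}$ drawn at epoch $e$ is independent of $\w_{\ell,e-1}$, so the data inclusion-probability identity \eqref{eq:dataIncProb} replaces $\frac{1}{B_\ell}\sum_{b\in\B{\ell}{e}}\frac{1}{N_\ell^2(p_b^{(\ell)})^2}$ by $\frac{1}{N_\ell^2}\sum_{n=1}^{N_\ell}\frac{1}{p_n^{(\ell)}}$. Likewise, because the agent selection is driven by randomness independent of the local data sampling (one may view the local trajectory as defined for every agent and then multiplied by the indicator $\mathbb{1}\{\ell\in\Li\}$), the agent inclusion-probability identity \eqref{eq:agentIncProb} turns $\frac{1}{L}\sum_{\ell\in\Li}\frac{1}{K^2p_\ell^2}(\cdots)$ into $\frac{1}{K^2}\sum_{k=1}^K\frac{1}{p_k}(\cdots)$. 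This reduces the whole lemma to bounding the mean-square local drift $\mathbb{E}\Vert\w_{k,e-1}-\w_{i-1}\Vert^2$.

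The third step, bounding this drift, is the main obstacle. Since $\w_{i-1}=\w_{k,0}$, the drift is the accumulation of the $e-1$ local updates \eqref{eq:localUp}, namely $\w_{k,e-1}-\w_{i-1}=-\frac{\mu}{E_kKp_k}\sum_{j=1}^{e-1}\bm{g}_{k,j}$ with $\bm{g}_{k,j}$ the scaled stochastic gradient at epoch $j$. The naive split $\Vert\w_{k,e-1}-\w_{i-1}\Vert^2\leq2\Vert\widetilde{\w}_{k,0}\Vert^2+2\Vert\widetilde{\w}_{k,e-1}\Vert^2$ is fatally lossy here: it treats a drift that is genuinely $O(\mu)$ as $O(1)$, whereas the point is that over the fixed, finite number of epochs $E_k$ each update moves the iterate by only $O(\mu)$. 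I would therefore apply Jensen to the sum of $e-1\le E_k$ increments, use the decomposition $\bm{g}_{k,j}=Kp_k\bm{q}_{k,i,j}+\grad{w}P_k(\w_{k,j-1})$ together with $\grad{w}P_k(w_k^o)=0$ and the $\delta$-Lipschitzness of $\grad{w}P_k$ (inherited as an average of the $Q_k$ gradients) to bound $\mathbb{E}\Vert\bm{g}_{k,j}\Vert^2$ in terms of $\mathbb{E}\Vert\widetilde{\w}_{k,j-1}\Vert^2$ and $\sigma_{q,k}^2$ via Lemma \ref{lem:locGradNoise}, and then control $\mathbb{E}\Vert\widetilde{\w}_{k,j-1}\Vert^2$ through the local mean-square recursion of Lemma \ref{lem:von-loc-inc-step}. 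The prefactor $\mu^2/(E_k^2K^2p_k^2)$ from the step sizes, combined with the $(e-1)^2\le E_k^2$ produced by Jensen over the increments, cancels the $E_k^2$ and gives $\mathbb{E}\Vert\w_{k,e-1}-\w_{i-1}\Vert^2=O(\mu^2)\big[\mathbb{E}\Vert\widetilde{\w}_{k,0}\Vert^2+\sigma_{q,k}^2\big]$.

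Finally I would close the loop. Writing $\widetilde{\w}_{k,0}=w_k^o-\w_{i-1}$ and applying the triangle inequality with Assumption \ref{assum:bdLocalMin} gives $\mathbb{E}\Vert\widetilde{\w}_{k,0}\Vert^2\leq 2\,\mathbb{E}\Vert\widetilde{\w}_{i-1}\Vert^2+2\xi^2$. Substituting back and absorbing the agent- and data-dependent constants — the $\delta^2$, the inclusion-probability sums $\frac{1}{K^2}\sum_k\frac{1}{p_k}\frac{1}{N_k^2}\sum_n\frac{1}{p_n^{(k)}}$, the epoch counts $E_k$, the batch sizes $B_k$, and the local rates $\lambda_k$ — into the $O(\cdot)$ symbols produces the stated inequality, with the $\xi^2$ and $\mathbb{E}\Vert\widetilde{\w}_{i-1}\Vert^2$ terms carrying a factor $O(\mu)$ (in fact $O(\mu^2)$, which the statement weakens to $O(\mu)$) and the $\frac{1}{K}\sum_k\sigma_{q,k}^2$ term a factor $O(\mu^2)$; in particular $\mathbb{E}\Vert\bm{q}_i\Vert^2=O(\mu)$, as required.
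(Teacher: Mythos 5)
Your proposal is correct and follows essentially the same route as the paper's proof in Appendix F: Jensen plus the $\delta$-Lipschitz property to reduce $\mathbb{E}\Vert\bm{q}_i\Vert^2$ to the mean-square drift $\mathbb{E}\Vert\w_{\ell,e-1}-\w_{i-1}\Vert^2$, expansion of that drift as $\mu$ times the accumulated local stochastic gradients, control of the resulting iterates via Lemmas \ref{lem:locGradNoise} and \ref{lem:von-loc-inc-step}, and finally $\widetilde{\w}_{\ell,0}=w_\ell^o-\w_{i-1}$ together with Assumption \ref{assum:bdLocalMin} to produce the $\xi^2$ term. The only cosmetic difference is that you bound the accumulated gradient sum by routing through the decomposition $\bm{g}_{k,j}=Kp_k\bm{q}_{k,i,j}+\grad{w}P_k(\w_{k,j-1})$ and citing Lemma \ref{lem:locGradNoise}, whereas the paper redoes the add-and-subtract of $\grad{w}Q_\ell(w_\ell^o;\cdot)$ inline; your bookkeeping also yields the slightly sharper $O(\mu^2)$ coefficient on the $\mathbb{E}\Vert\widetilde{\w}_{i-1}\Vert^2$ and $\xi^2$ terms, which is consistent with (and stronger than) the stated $O(\mu)$.
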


\begin{proof}
	see Appendix \ref{app:incNoise}.
\end{proof}
\noindent
We observe an average data variability term across agents $\sigma_{q,k}^2$, and a model variability term $\xi^2$. However, the effect of the latter dominates since it is multiplied by an $O(\mu)$ term as opposed to $O(\mu^2)$.

\subsection{Main Theorem}
Now that we have bounded each term of \eqref{eq:l2NormErr} and using Lemma \ref{lemm:convCentSol} and \ref{lemm:incNoise}, we find:
	\begin{align}
		&\mathbb{E}\Vert \widetilde{\w}_i\Vert^2 \notag \\
		&\leq \frac{1}{\alpha}\left( \lambda \mathbb{E}\Vert  \widetilde{\w}_{i-1}\Vert^2 + \mu^2 \sigma_s^2\right) + \frac{O(\mu^3)}{1-\alpha}\left( \mathbb{E}\Vert  \widetilde{\w}_{i-1}\Vert^2 + \xi^2 \right)\notag\\
		&\quad + \frac{O(\mu^4)}{1-\alpha}\frac{1}{K}\sum_{k=1}^K  \sigma_{q,k}^2 , \notag \\
		&= \lambda' \mathbb{E}\Vert \widetilde{\w}_{i-1} \Vert^2 + \frac{\mu^2 \sigma_s^2}{\alpha}  + \frac{O(\mu^3)}{1-\alpha}\xi^2 + \frac{O(\mu^4)}{1-\alpha}\frac{1}{K}\sum_{k=1}^K\sigma_{q,k}^2,
	\end{align}
where:
\begin{align}
	\lambda' &\eqdef \frac{\lambda}{\alpha} + \frac{O(\mu^3)}{1-\alpha}, \notag \\
	&= \frac{1-2\mu\nu + \mu^2(\delta^2 +\beta_s^2)}{\alpha} + \frac{O(\mu^3)}{1-\alpha}\notag \\
	&= O\left(\frac{1}{\alpha}\right)  + O\left(\frac{\mu}{\alpha}\right)+O\left(\frac{\mu^2}{\alpha}\right) + O\left(\frac{\mu^3}{1-\alpha}\right).
\end{align}
Applying this bound recursively we obtain:
\begin{align}
	\mathbb{E}\Vert \widetilde{\w}_{i}\Vert^2 
	&\leq \left(\lambda'\right)^i \mathbb{E}\Vert \widetilde{\w}_0 \Vert^2 + \frac{1-\left(\lambda'\right)^i}{1-\lambda'} \Bigg( \frac{\mu^2 \sigma_s^2}{\alpha} +   \frac{O(\mu^3)}{1-\alpha} \xi^2 \notag \\
		&\quad + \frac{O(\mu^4)}{1-\alpha}\frac{1}{K}\sum_{k=1}^K \sigma_{q,k}^2 \Bigg),
\end{align}
and then taking the limit $i \to \infty$:
\begin{align}
	&\lim_{i\to \infty} \mathbb{E}\Vert \widetilde{\w}_{i}\Vert^2 \notag \\ 
	&\leq \frac{1}{1-\lambda'} \left(\frac{ \mu^2 \sigma_s^2 }{\alpha}  + \frac{O(\mu^3)}{1-\alpha}\xi^2+\frac{O(\mu^4)}{1-\alpha} \frac{1}{K}\sum_{k=1}^K  \sigma_{q,k}^2 \right).
\end{align}
for $\lambda' < 1$, which for $\alpha = \sqrt{\lambda}$  is achieved when:
\begin{align}
	&\mu < \min \left \{ \frac{2\nu}{\delta^2 + \beta_s^2}, \frac{2\nu}{\delta^2 + \frac{E_k}{K^2p_k^2}\beta_{s,k}^2} \right \}, \\
	& O(\mu^3) < (1-\sqrt{\lambda})^2.
\end{align}
Thus, since $\alpha = O(1)$, $1-\alpha = O(\mu)$ and $1-\lambda'  = O(\mu)$:
\begin{align}
	\lim_{i\to\infty}\mathbb{E}\Vert \widetilde{\w}_{i}\Vert^2 \leq  O(\mu) \sigma_s^2 + O(\mu)\xi^2 + O(\mu^2)\frac{1}{K}\sum_{k=1}^K \sigma_{q,k}^2  .
\end{align} 
The result is summarized in the theorem.
\begin{theorem}[\textbf{Mean-square-error convergence of federated learning under importance sampling}]\label{thrm:MT} Consider the iterates $\w_i$ generated by the importance sampling federated averaging algorithm. For sufficiently small step-size $\mu$, it holds that the mean-square-error converges exponentially fast: 
	\begin{align}
		\mathbb{E}\Vert \widetilde{ \w}_i \Vert^2 \leq & O\left((\lambda')^i\right) + O(\mu) \left(\sigma_s^2  + \xi^2\right)+ O(\mu^2) \frac{1}{K}\sum_{k=1}^K \sigma_{q,k}^2, 
	\end{align}
	where $\lambda' = 1- O(\mu) + O(\mu^2)  \in [0,1).$
	\qed
\end{theorem}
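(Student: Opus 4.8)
The plan is to assemble the five preceding lemmas into a single scalar recursion for $\mathbb{E}\Vert \widetilde{\w}_i \Vert^2$ and then iterate it. The natural starting point is the Jensen split \eqref{eq:l2NormErr}, which separates the error into a \emph{centralized} block built from $\bm{s}_i$ and an \emph{incremental} block built from $\bm{q}_i$, weighted by $1/\alpha$ and $1/(1-\alpha)$ respectively for a free parameter $\alpha \in (0,1)$ to be pinned down at the end.

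First I would take conditional expectations and control the centralized block. Because $\bm{s}_i$ has zero mean by Lemma \ref{lemm:gradNoise}, the cross term vanishes and this block is exactly the quantity analyzed in Lemma \ref{lemm:convCentSol}; invoking \eqref{eq:thrm-MSD-rec} bounds it by $\frac{1}{\alpha}\big(\lambda\,\mathbb{E}\Vert\widetilde{\w}_{i-1}\Vert^2 + \mu^2\sigma_s^2\big)$ with $\lambda = 1-2\mu\nu+\mu^2(\delta^2+\beta_s^2)$. For the incremental block I would substitute the bound from Lemma \ref{lemm:incNoise}; the factor $\mu^2$ already carried in \eqref{eq:l2NormErr} upgrades the $O(\mu)$, $O(\mu)$, $O(\mu^2)$ terms of that lemma to $O(\mu^3)$, $O(\mu^3)$, $O(\mu^4)$, so the block becomes $\frac{1}{1-\alpha}\big(O(\mu^3)\mathbb{E}\Vert\widetilde{\w}_{i-1}\Vert^2 + O(\mu^3)\xi^2 + O(\mu^4)\frac{1}{K}\sum_{k}\sigma_{q,k}^2\big)$. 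Combining the two blocks yields a geometric recursion $\mathbb{E}\Vert\widetilde{\w}_i\Vert^2 \le \lambda'\,\mathbb{E}\Vert\widetilde{\w}_{i-1}\Vert^2 + c$ with contraction factor $\lambda' = \frac{\lambda}{\alpha} + \frac{O(\mu^3)}{1-\alpha}$ and a constant driving term collecting $\sigma_s^2$, $\xi^2$, and the averaged $\sigma_{q,k}^2$. Iterating and letting $i\to\infty$ then produces the steady-state bound, while the homogeneous part supplies the $O\big((\lambda')^i\big)$ transient.

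The crux is the bookkeeping that makes $\lambda' \in [0,1)$ with the advertised order $\lambda' = 1 - O(\mu) + O(\mu^2)$. The decisive choice is $\alpha = \sqrt{\lambda}$: since $\lambda = 1 - O(\mu)$, this gives $1-\alpha = O(\mu)$ and $\lambda/\alpha = \sqrt{\lambda} = 1 - O(\mu) + O(\mu^2)$, while the incremental contribution $O(\mu^3)/(1-\alpha) = O(\mu^2)$ is absorbed, provided the step-size also obeys $O(\mu^3) < (1-\sqrt{\lambda})^2$ in addition to the two stability thresholds of Lemmas \ref{lemm:convCentSol} and \ref{lem:von-loc-inc-step}. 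The same identity $1-\alpha = O(\mu)$ forces $1-\lambda' = O(\mu)$, so dividing the constant term by $1-\lambda'$ demotes $\mu^2\sigma_s^2 \mapsto O(\mu)\sigma_s^2$, $\mu^3\xi^2 \mapsto O(\mu)\xi^2$, and $\mu^4\frac{1}{K}\sum_{k}\sigma_{q,k}^2 \mapsto O(\mu^2)\frac{1}{K}\sum_{k}\sigma_{q,k}^2$, which is precisely the claimed bound.

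I expect this order-tracking around $\alpha=\sqrt{\lambda}$ — verifying that every accumulated constant lands at exactly the stated power of $\mu$ once divided by $1-\lambda'$, and that all step-size constraints are simultaneously satisfiable — to be the main obstacle. The genuinely analytic content (zero mean of $\bm{s}_i$, the two mean-square contractions, and the $O(\mu)$ decay of the incremental noise) is entirely carried by the preceding lemmas, so beyond this the argument is routine.
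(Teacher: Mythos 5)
Your proposal is correct and follows essentially the same route as the paper: the Jensen split \eqref{eq:l2NormErr}, the substitution of Lemmas \ref{lemm:convCentSol} and \ref{lemm:incNoise} to obtain the recursion with $\lambda' = \lambda/\alpha + O(\mu^3)/(1-\alpha)$, the choice $\alpha = \sqrt{\lambda}$ with the accompanying step-size conditions, and the final order bookkeeping via $1-\lambda' = O(\mu)$ all match the paper's argument. No gaps to report.
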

\section{Importance Sampling}

Due to the heterogeneity of nodes which arise from their data and computational capabilities, it is important to guide the algorithm based on the potential contribution that each agent can have on the overall performance. By allowing asynchronicity, i.e., different epoch sizes among agents, we can take advantage of the varying computational capabilities. From \cite{impSampSayed}, we know that the choice of samples at each iteration affects the solution. Therefore, instead of choosing the samples uniformly, we consider importance sampling where samples are chosen according to some distribution to be determined. A similar scheme can be enforced on the participating agents. In what follows, we show that using importance sampling enhances the overall performance.

\subsection{Agent Level: Importance Sampling of Data}
Every agent $k$ at each epoch must select a mini-batch of data based on the normalized inclusion probabilities $p_n^{(k)}$. To find the optimal probabilities, we minimize the bound on the variance of the local gradient noise $\sigma_{s,k}^2$.
We solve the problem for both cases when sampling is done with replacement and without replacement. The results are the same for both sampling schemes.
\begin{lemma}[\textbf{Optimal local data inclusion probabilities}]\label{lem:optPD}
The optimal local data normalized inclusion probabilities are given by:
	\begin{equation}\label{eq:dataOptP}
		 p_n^{(k),o} \eqdef \frac{\Vert \grad{w}Q_k(w^o;x_{k,n})\Vert}{ \sum_{m=1}^{N_k} \Vert \grad{w}Q_k(w^o;x_{k,m})\Vert}.
	\end{equation}
\end{lemma}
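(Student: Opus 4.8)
The plan is to read the objective off the variance bound and minimize it as a constrained optimization over the probability simplex. Since the normalized inclusion probabilities satisfy $\sum_{n=1}^{N_k} p_n^{(k)} = 1$ (as noted right after \eqref{eq:dataIncProb}), and since $\sigma_{s,k}^2$ in \eqref{eq:locCstSig} depends on the data probabilities only through $\sum_{n=1}^{N_k} (p_n^{(k)})^{-1}\Vert\grad{w}Q_k(w^o;x_{k,n})\Vert^2$, the constant prefactor $6/(E_kB_kN_k^2)$ is irrelevant to the minimizer. Writing $a_n \eqdef \Vert\grad{w}Q_k(w^o;x_{k,n})\Vert$, the task reduces to
\begin{equation*}
\min_{\{p_n^{(k)}\}} \ \sum_{n=1}^{N_k}\frac{a_n^2}{p_n^{(k)}} \quad \text{subject to} \quad \sum_{n=1}^{N_k} p_n^{(k)} = 1, \ p_n^{(k)} > 0.
\end{equation*}

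First I would record that each term $a_n^2/p$ is convex in $p>0$, so the objective is convex and the feasible set is an affine simplex; hence any Karush--Kuhn--Tucker point is automatically the unique global minimizer and no second-order check is needed. Introducing a multiplier $\lambda$ for the equality constraint, the stationarity condition $-a_n^2/(p_n^{(k)})^2 + \lambda = 0$ gives $p_n^{(k)} = a_n/\sqrt{\lambda}$, and imposing $\sum_n p_n^{(k)} = 1$ fixes $\sqrt{\lambda} = \sum_{m=1}^{N_k} a_m$, which is exactly \eqref{eq:dataOptP}. Equivalently, a one-line Cauchy--Schwarz bound
\begin{equation*}
\Big(\sum_{n=1}^{N_k} a_n\Big)^2 = \Big(\sum_{n=1}^{N_k} \tfrac{a_n}{\sqrt{p_n^{(k)}}}\sqrt{p_n^{(k)}}\Big)^2 \le \Big(\sum_{n=1}^{N_k}\tfrac{a_n^2}{p_n^{(k)}}\Big)\Big(\sum_{n=1}^{N_k} p_n^{(k)}\Big) = \sum_{n=1}^{N_k}\frac{a_n^2}{p_n^{(k)}}
\end{equation*}
exhibits the minimal value $(\sum_m a_m)^2$ and shows equality holds precisely when $p_n^{(k)} \propto a_n$, giving the same answer directly.

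For the claim that the with- and without-replacement cases coincide, I would note that Lemma \ref{lemm:gradNoise} shows the two schemes alter the gradient-noise bound only through the overall $L$-scaling at the agent layer; the data-layer contribution enters via $\sigma_{s,k}^2$ with the same functional dependence on $\{p_n^{(k)}\}$ in both cases, so the minimizer is identical. The optimization itself is entirely routine; the only real subtlety is \emph{feasibility}: the Lagrangian solution is an interior point of the simplex but does not by itself guarantee the genuine inclusion-probability constraint $B_k p_n^{(k),o} \le 1$ implied by \eqref{eq:dataIncProb}. I expect this to be the one point needing care --- either it is taken to hold (as is implicit in stating \eqref{eq:dataOptP} as \emph{the} optimum), or, when violated, one saturates the offending coordinates at $1/B_k$ and re-solves on the remainder; for the stated lemma the interior solution is the intended minimizer.
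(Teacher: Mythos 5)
Your proposal is correct and follows essentially the same route as the paper: minimize $\sum_n \Vert\grad{w}Q_k(w^o;x_{k,n})\Vert^2/p_n^{(k)}$ over the simplex via a Lagrange multiplier, obtaining $p_n^{(k),o}\propto \Vert\grad{w}Q_k(w^o;x_{k,n})\Vert$ and fixing the constant by the normalization constraint. Your additions --- the convexity argument certifying that the stationary point is the global minimizer, the Cauchy--Schwarz one-liner, and the caveat about the genuine inclusion-probability constraint $B_kp_n^{(k)}\le 1$ --- are all sound and go slightly beyond what the paper's proof records, but the core argument is identical.
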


\begin{proof}
	By introducing a Lagrange multiplier $\lambda$, the optimization problem can be reformulated as:
	\begin{equation}
		 \min_{p_n^{(k)},\lambda} \sum_{n=1}^{N_k}\frac{1}{p_n^{(k)}} \Vert \grad{w}Q_k(w^o;x_{k,n})\Vert^2 + \lambda\left( \sum_{n=1}^{N_k} p_n^{(k)} -1 \right).
	\end{equation}
	Then, taking the derivative with respect to $p_n^{(k)}$ and setting it to zero we get \eqref{eq:pfOptDP}. 	Next, substituting $p_n^{(k),o}$ into the condition, we find \eqref{eq:pfOptDp-lam}.
	\begin{align}
		p_n^{(k),o} &= \frac{ \Vert \grad{w}Q_k(w^o;x_{k,n})\Vert}{\sqrt{\lambda}} \label{eq:pfOptDP} \\
		\sqrt{\lambda} &= \sum_{m=1}^{N_k} \Vert \grad{w}Q_k(w^o;x_{k,m})\Vert \label{eq:pfOptDp-lam}
	\end{align}
\end{proof}
\vspace{-0.1cm}
\noindent
As seen in Lemma \ref{lem:optPD}, more weight is given to a data point that has a greater gradient norm, thus increasing its chances of being sampled and resulting in a faster convergence rate. In addition, we observe that the more homogeneous the data is the more uniform the inclusion probability is.

\subsection{Cloud Level: Importance Sampling of Agents}
At each iteration, the cloud must select a subset of agents to participate. The agents are selected in accordance with the normalized inclusion probabilities $p_k$. To find the optimal probabilities, we minimize the bound on the variance of the gradient noise $\sigma_{s}^2$
The following result holds for sampling with and without replacement, since the gradient noise only differ by a multiplicative factor.

\begin{lemma}[\textbf{Optimal agent inclusion probabilities for sampling with replacement}]\label{lem:optPKw}
	The optimal agent normalized inclusion probabilities are given by \eqref{eq:agentOptP}, where $\alpha_k  = \left( 3 + \frac{6}{E_kB_k} \right)$:
	\begin{equation}\label{eq:agentOptP}
		 p_k^{o} \eqdef  \frac{\sqrt{\sigma_{s,k}^2 +\alpha_k \Vert \grad{w}P_k(w^o)\Vert^2}}{ \sum_{\ell=1}^{K} \sqrt{\sigma_{s,\ell}^2 + \alpha_{\ell}\Vert \grad{w}P_{\ell}(w^o)\Vert^2}}.
	\end{equation}
\end{lemma}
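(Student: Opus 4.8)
The plan is to recognize this as the same constrained-minimization template already used for Lemma~\ref{lem:optPD}, now applied one level up at the cloud. Starting from the expression for $\sigma_s^2$ established in Lemma~\ref{lemm:gradNoise}, I would first fold the suboptimality term into the shorthand $\alpha_k = 3 + 6/(E_kB_k)$ and rewrite the objective as
\begin{equation}
\sigma_s^2 = \frac{1}{LK^2}\sum_{k=1}^K \frac{1}{p_k}\Big( \sigma_{s,k}^2 + \alpha_k \Vert \grad{w}P_k(w^o)\Vert^2 \Big),
\end{equation}
and then observe that the leading constant $1/(LK^2)$ plays no role in locating the minimizer. The only constraint is that the normalized inclusion probabilities sum to one, $\sum_{k=1}^K p_k = 1$, exactly as the local probabilities $\{p_n^{(k)}\}$ did in Lemma~\ref{lem:optPD}.

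Next I would form the Lagrangian with multiplier $\lambda$,
\begin{equation}
\min_{p_k,\lambda}\ \sum_{k=1}^K \frac{1}{p_k}\Big( \sigma_{s,k}^2 + \alpha_k \Vert \grad{w}P_k(w^o)\Vert^2 \Big) + \lambda\Big(\sum_{k=1}^K p_k - 1\Big),
\end{equation}
and set the partial derivative with respect to each $p_k$ to zero. Since $\tfrac{d}{dp_k}(1/p_k) = -1/p_k^2$, this gives $p_k^2 \propto \sigma_{s,k}^2 + \alpha_k \Vert \grad{w}P_k(w^o)\Vert^2$, so that
\begin{equation}
p_k^o = \frac{\sqrt{\sigma_{s,k}^2 + \alpha_k \Vert \grad{w}P_k(w^o)\Vert^2}}{\sqrt{\lambda}}.
\end{equation}
Enforcing the normalization constraint then fixes $\sqrt{\lambda} = \sum_{\ell=1}^K \sqrt{\sigma_{s,\ell}^2 + \alpha_\ell \Vert \grad{w}P_\ell(w^o)\Vert^2}$, and substituting back recovers \eqref{eq:agentOptP} verbatim.

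Finally I would justify optimality and the two remarks attached to the statement. Each summand $1/p_k$ is convex on the positive orthant and the feasible set is a simplex, so the objective is convex and the stationary point found above is the global minimizer; the nonnegativity constraints $p_k \geq 0$ are automatically met because every numerator $\sigma_{s,k}^2 + \alpha_k \Vert \grad{w}P_k(w^o)\Vert^2$ is nonnegative. The assertion that sampling with and without replacement yield the same optimizer follows immediately from Lemma~\ref{lemm:gradNoise}: the two cases differ only by the scalar factor $L$ in $\sigma_s^2$, which cancels in the minimization. I do not anticipate a genuine obstacle in this proof; the only care required is to track the constant prefactors so that they correctly drop out, exactly mirroring the argument for the data-level probabilities.
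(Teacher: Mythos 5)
Your proposal is correct and takes essentially the same approach as the paper, whose proof of this lemma consists of the single remark that it follows similarly to Lemma~\ref{lem:optPD}, i.e., the identical Lagrange-multiplier minimization of $\sum_{k}p_k^{-1}\big(\sigma_{s,k}^2+\alpha_k\Vert \grad{w}P_k(w^o)\Vert^2\big)$ subject to $\sum_k p_k=1$. Your additional observations (convexity guaranteeing global optimality, and the cancellation of the factor $L$ between the with- and without-replacement bounds) are correct and simply make explicit what the paper leaves implicit.
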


\begin{proof}
The proof follows similarly to that of Lemma \ref{lem:optPD}.
\end{proof}
\noindent
We observe that the normalized inclusion probabilities will be closer to a uniform distribution the more the data and model variability terms are similar across agents. 

\subsection{Practical Issues}
In the previous subsections, we focused on finding the optimal inclusion probabilities for the agents and data. However, several practical issues arise. The first is that all probabilities are calculated based on the optimal model $w^o$, which we do not have access to. To overcome this issue, we estimate the probabilities at each iteration by calculating them according to the current model $\w_{i-1}$. Thus,

\begin{align}
	\widehat{p}_{n}^{(k),o} &= \frac{\Vert \grad{w}Q_k(\w_{i-1};x_{k,n})\Vert}{\sum_{m=1}^{N_k}\Vert \grad{w}Q_k(\w_{i-1};x_m)\Vert},  \label{eq:approxPn1} \\
	\widehat{p}_k^o &= \frac{\sqrt{\sigma^2_{s,k} + \alpha_k \Vert \grad{w}P_k(\bm{w}_{i-1})\Vert^2}}{\sum_{\ell=1}^K \sqrt{\sigma^2_{s,\ell} + \alpha_{\ell} \Vert \grad{w}P_{\ell}(\bm{w}_{i-1})\Vert^2}} . \label{eq:approxPk1}
\end{align}
In addition, since calculating the true gradient of the local loss function is costly, we replace it with the mini-batch approximation when calculating $p_k^o$:
\begin{equation}
	\widehat{p}_k^o = \frac{\sqrt{\sigma^2_{s,k} + \alpha_k \left\Vert \widehat{\grad{w}P_k}(\bm{w}_{i-1}) \right\Vert^2}}{\sum_{\ell=1}^K \sqrt{\sigma^2_{s,\ell} + \alpha_{\ell}\left\Vert \widehat{\grad{w}P_{\ell}}(\bm{w}_{i-1} )\right\Vert^2}}  .
\end{equation}
Furthermore, every agent has access to all of its data and consequently to all of the gradients. However, the cloud does not have access to the gradients of all agents, and in turn cannot calculate the denominator of $p_k$. Instead, we propose the following solution: at iteration 0, all probabilities are set to $p_k = \frac{1}{K}$; then, during the $i^{th}$ iteration, after the participating agents $\ell \in \Li$ send the cloud their stochastic gradients $\widehat{\grad{w}P_{\ell}}(\w_{i-1})$, the probabilities are updated as follows: 
\begin{align}\label{eq:approxPk}
	\widehat{p}_{k}^o = &\frac{\sqrt{\sigma^2_{s,k} + \alpha_k\left\Vert \widehat{\grad{w}P_k}(\bm{w}_{i-1}) \right\Vert^2}}{\sum_{\ell \in \Li} \sqrt{\sigma^2_{s,\ell} + \alpha_{\ell} \left\Vert \widehat{\grad{w}P_{\ell}}(\bm{w}_{i-1}) \right\Vert^2}}   \left( 1 - \sum_{\ell \in \Li^c}  \widehat{p}_{\ell}^o \right),
\end{align}
where the multiplicative factor follows from ensuring all the probabilities $\widehat{p}_k^o$ sum to $1$.
Similarly for the local probabilities, since we are implementing mini-batch SGD, we only update the probabilities of the data points that were sampled:
\begin{align}\label{eq:approxPn}
	\widehat{p}_n^{(k),o} = &\frac{\Vert \grad{w}Q_k(\w_{i-1};x_{k,n})\Vert}{\sum_{b \in \B{k}{e}}\Vert \grad{w}Q_k(\w_{i-1};x_b)\Vert} \left( 1 - \sum_{b \in \B{k}{e}^c}\widehat{p}_b^{(k),o} \right).
\end{align}

Finally, the last problem arises when sampling without replacement. We have found the optimal inclusion probabilities and not the optimal sampling probabilities, and moving from the former to the latter is not trivial. Thus, we rely on the literature under sampling without replacement with unequal probabilities. Multiple sampling schemes exist such that the sampling probabilities do not need to be calculated explicitly. In general, there are multiple non-uniform sampling without replacement schemes that guarantee the same inclusion probabilities. We choose to implement the sampling scheme proposed in \cite{hartley1962}, which ensures the inclusion probabilities are $p_k^{o}$ and $p_n^{(k),o}$ for the agents and data, respectively. More explicitly, we first calculate the \textit{progressive totals of the inclusion probabilities $\Pi_k = \sum_{\ell=1}^k Lp_k$, for $k=1,2,\cdots,K$, and we set $\Pi_0 = 0$. Then, we select uniformly at random a \textit{uniform variate} $d \in [0,1)$. Then, we select the $L$ agents that satisfy $\Pi_{k-1} \leq q+ \ell < \Pi_{k}$, for some $\ell = 0,1,\cdots,L-1$.
}

\section{Experimental Section}
To validate the theoretical results, we devise two experiments. The first consists of simulated data with quadratic risk functions, and the second consists of a real dataset with logistic risk functions.

\subsection{Regression}
We first validate the theory on a regression problem. We consider $K = 300$ agents, for which we generate $N_k = 100$ data points for each agent $k$ as follow: Let $\bm{u}_{k,n}$ denote an independent streaming sequence of two-dimensional random vectors with zero mean and covariance matrix $R_{u_k} = \mathbb{E}\:\bm{u}_{k,i}\bm{u}_{k,i}^{\tran}$. Let $\bm{d}_k(n)$ denote a streaming sequence of random variables that have zero mean and variance $\sigma_{d_k}^2 = \mathbb{E}\:\bm{d}_k^2(n)$. Let $r_{d_ku_k} = \mathbb{E}\:\bm{d}_k(n)\bm{u}_{k,n}$ be the cross-variance vector. The data $\{\bm{d}_k(n),\bm{u}_{k,n}\}$ are related by the following linear regression model:
\begin{equation}
	\bm{d}_k(n) = \bm{u}_{k,n}w^{\star} + \bm{v}_k(n),
\end{equation}
for some randomly generated parameter vector $w^{\star}$ and where $\bm{v}_k(n)$ is a zero mean white noise process with variance $\sigma_{v_k}^2 = \mathbb{E}\bm{v}_k^2(n)$, independent of $\bm{u}_{k,n}$. The local risk is given by:
\begin{equation}
	P_k(w) = \frac{1}{N_k}\sum_{n=1}^{N_k} \Vert \bm{d}_k(n) - \bm{u}_{k,n}^{\tran}w\Vert^2 + \rho \Vert w\Vert^2.
\end{equation}
We set $\rho = 0.001$, while the batch sizes $B_k$ and the epoch sizes $E_k$ are chosen uniformly at random from the range $[1,10]$ and $[1,5]$, respectively. During each iteration, there are $L = 6$ active agents.
To test the performance of the algorithm, we calculate at each iteration the mean-square-deviation (MSD)  of the parameter vector $\bm{w}_i$ with respect to the true model $w^o$:
\begin{equation}
	\mbox{\rm MSD}_i = \Vert \bm{w}_i - w^o\Vert^2.
\end{equation}
The optimization problem has the closed form expression: 
\begin{equation}
	w^o = \left( \widehat{R}_{u} + \rho I\right)^{-1} \widehat{R}_u w^{\star} +\left( \widehat{R}_{u} + \rho I\right)^{-1} \widehat{r}_{uv},
\end{equation}
where:
\begin{align}
	\widehat{R}_u &\eqdef \frac{1}{K}\sum_{k=1}^K \frac{1}{N_k}\sum_{n=1}^{N_k} \bm{u}_{k,n}^\tran \bm{u}_{k,n}, \\
	\widehat{r}_{uv} &\eqdef  \frac{1}{K}\sum_{k=1}^K \frac{1}{N_k}\sum_{n=1}^{N_k} \bm{v}_k(n)\bm{u}_{k,n} . 
\end{align}

We run four tests: we first run the standard FedAvg algorithm where the mini-batches are chosen uniformly with replacement. We then run Algorithm \ref{alg:AFL}, once with the optimal probabilities $p_n^{(k)}$ and $p_k$ in \eqref{eq:dataOptP}--\eqref{eq:agentOptP}, once with the approximate probabilities \eqref{eq:approxPn1}--\eqref{eq:approxPk1}, and once with \eqref{eq:approxPk}--\eqref{eq:approxPn}. We implement the sampling scheme from \cite{hartley1962}. We set the step-size $\mu = 0.01$. Each test is repeated 100 times, and the resulting MSD is averaged. We get the curves as shown in Figure \ref{fig:regMSD}. We see that the importance sampling scheme does better than the standard sampling algorithm. This comes as no surprise, since the probabilities were chosen to minimize the bound on the MSD. The importance sampling scheme (green curve) improved the MSD bound by 23.1 dB compared to the standard federated learning scheme (blue curve). Furthermore, we observe that approximate probabilities do not degrade the performance of the algorithm. Our proposed approximate solution \eqref{eq:approxPk}--\eqref{eq:approxPn} (purple curve) performs just as well as using the true probabilities. In fact, we observe that the approximate probabilities converge to the true ones, $\Vert p_k^o - \widehat{p}_k^o\Vert = 1.22e-2$ and $\frac{1}{K}\sum_{k=1}^K\Vert p_n^{(k),o} - \widehat{p}_n^{(k),o} \Vert= 1.54e-2$. Similarly, the approximate probabilities \eqref{eq:approxPk1}--\eqref{eq:approxPn1}  do not degrade the overall performance. They, in fact, outperform the other solutions and converge faster (red curve). This is not surprising, since, at each iteration, we are attributing higher probabilities to agents and data points that have greater gradients. We are increasing their chances of being selected and thus taking steeper steps towards the true model. 

\begin{figure}[h!]
\begin{center}
	\includegraphics[scale=0.5]{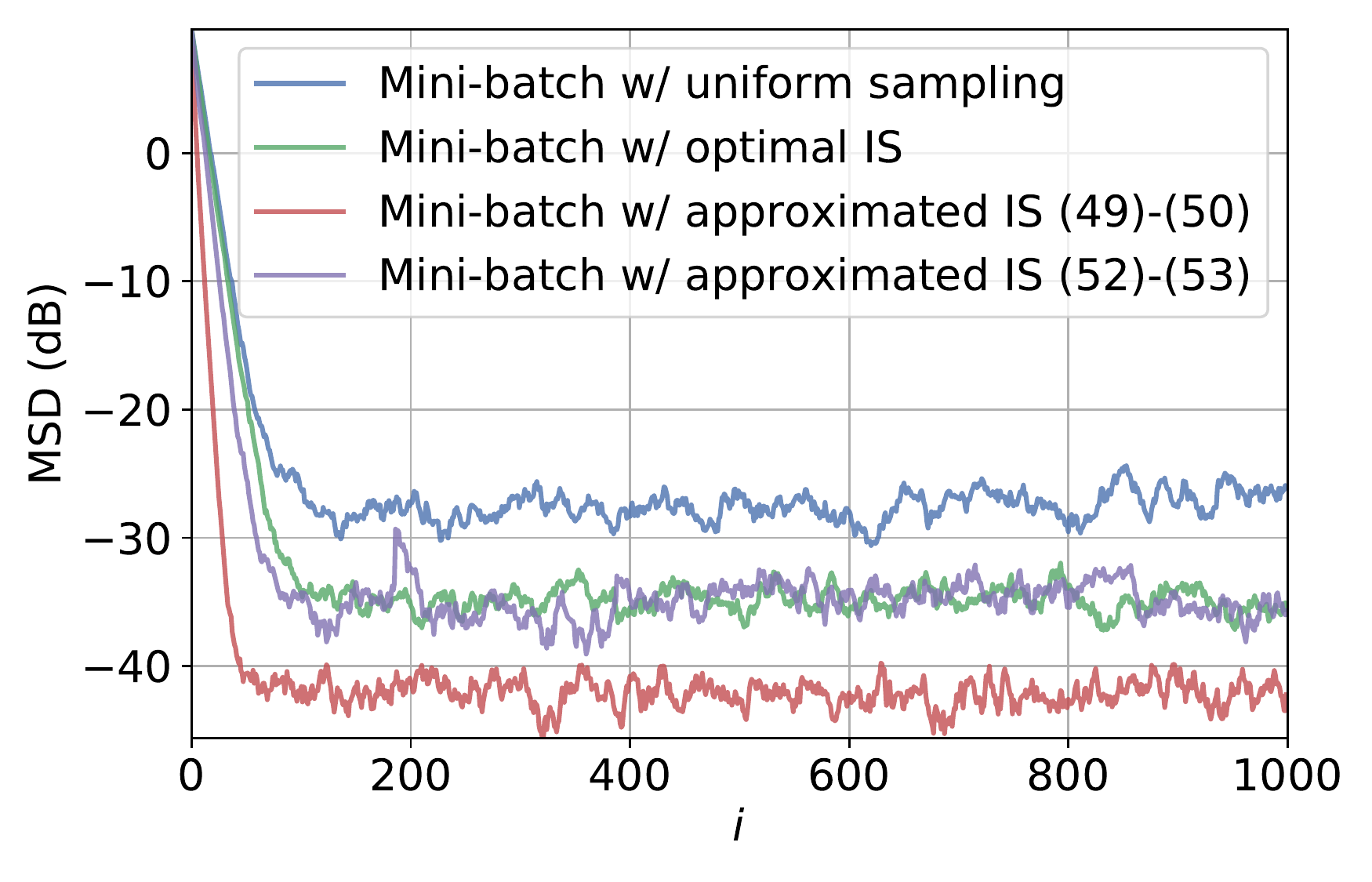}
	\caption{MSD plots of the regression problem: blue curve is the standard mini-batch implementation, green curve is the importance sampling implementation with the true probabilities, red curve is the importance sampling implementation with approximate probabilities \eqref{eq:approxPn1}--\eqref{eq:approxPk1}, purple curve is the importance sampling implementation with approximate probabilities \eqref{eq:approxPk}--\eqref{eq:approxPn}.}\label{fig:regMSD}
\end{center}
\end{figure}

\subsection{Classification}
We next study the theory in a classification context. We consider the ijcnn1 dataset \cite{ijcnn1}. The dataset consists of $35000$ training samples and $91701$ testing samples of $M = 22$ attributes. We distribute the data randomly in a non-IID fashion to $K = 100$ agents. Each agent receives a random number $N_k$ of data points, where $N_k$ ranges from 79 to 688. We run the two algorithms FedAvg and ISFedAvg. We set $\mu = 0.25$, $\rho = 0.0001$, $L = 10$, $B_k = 1$, and $E_k = 1$. We plot the testing error in Figure \ref{fig:classMSD}. We observe that importance sampling improves the testing error from $22.45\%$ to $18.46\%$. This is because importance sampling is more sample efficient.

\begin{figure}[h!]
\begin{center}
	\includegraphics[scale=0.5]{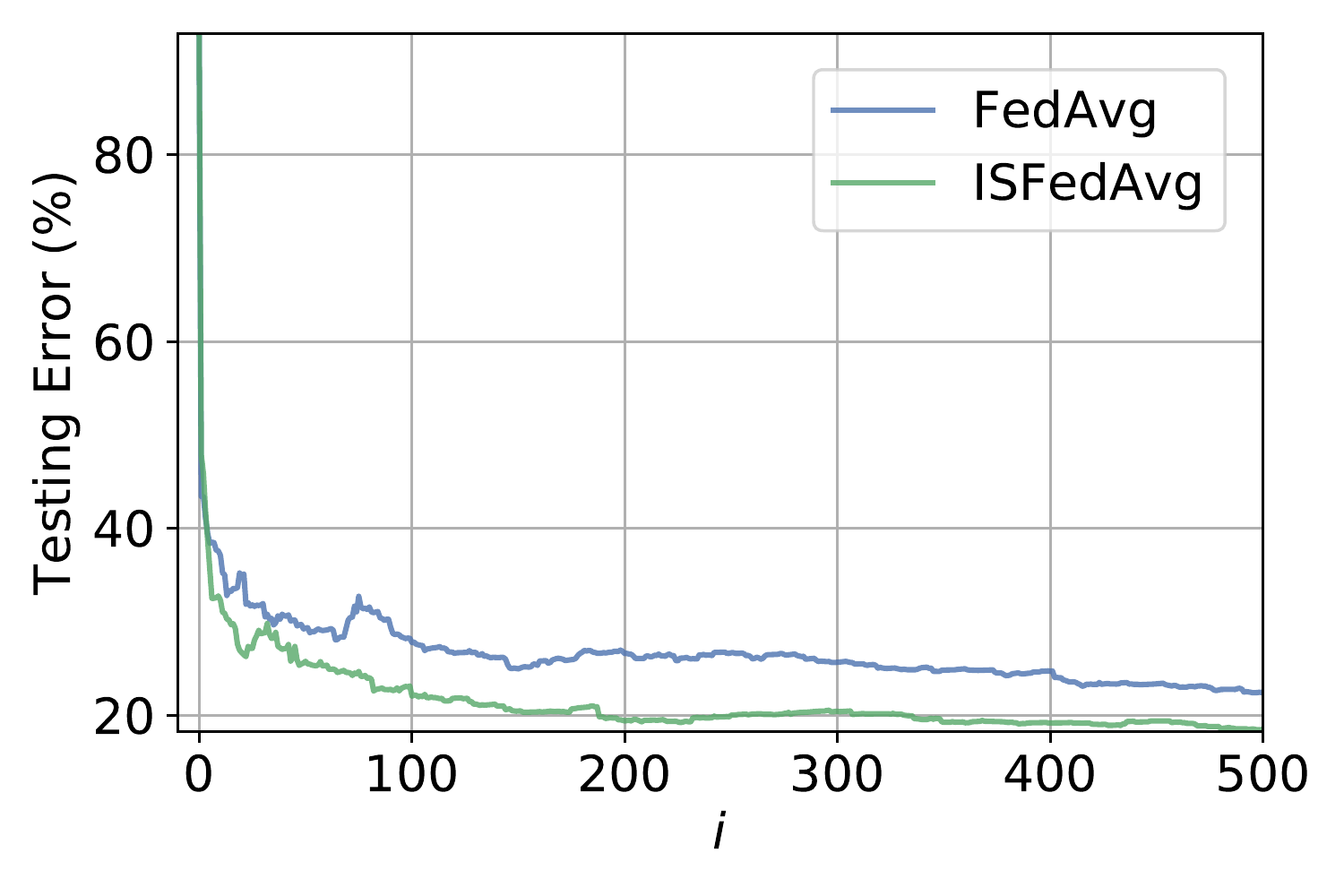}
	\caption{Testing error plots of the classification problem.
	}\label{fig:classMSD}
\end{center}
\end{figure}

\vspace{-1.2cm}
\section{Conclusion}
This work incorporates two levels of importance sampling into the operation of federated learning: one for selecting agents and another for selecting data batches at the agents. Optimal dynamical choices for the sampling probabilities are derived, and a detailed convergence analysis is performed. We also provided approximate expressions for the optimal sampling policies and illustrate the theoretical findings and the performance enhancement by means of simulations.   

\appendices
\section{Result on the Variance of the Mini-batch Estimate }\label{app:varMiniB}
We introduce the following auxiliary result that is a generalization of \cite{sampBook}.  Let \( \{ \mathcal{S} = \boldsymbol{x}_n \in \mathds{R}^M \}_{n=1}^N \) denote a set of \( N \) \emph{independent} random variables, each with mean \( \mathbb{E} \boldsymbol{x}_n = \overline{x}_n \) and variance \( \sigma_n^2 = \mathbb{E} {\| \boldsymbol{x}_n - \mathbb{E} \boldsymbol{x}_n \|}^2 \). We consider the problem of estimating the expected value of the sample mean:
\begin{equation}
	\overline{x} \triangleq \mathbb{E} \left( \frac{1}{N}\sum_{n=1}^N \boldsymbol{x}_n \right)
\end{equation}
We consider two estimators for \( \overline{x} \), both constructed by considering a mini-batch of samples, where \( \boldsymbol{x}_b^{\mathrm{r}} \) is constructed by sampling from \( \mathcal{S} \) \emph{with replacement}, and \( \boldsymbol{x}_b^{\mathrm{nr}} \) is sampled from \( \mathcal{S} \) \emph{without replacement}. Let $p_n$ be the normalized inclusion probability of $\bm{x}_n$. We then define the two estimators \eqref{eq:rEst} and \eqref{eq:nrEst}, and we would like to quantify the efficacy of these estimators in estimating \( \overline{x} \). 
\vspace{-0.1cm}
\begin{align}
	\widehat{\boldsymbol{x}}^{\mathrm{r}} &\triangleq \frac{1}{B} \sum_{b=1}^B \frac{1}{N p_b} \boldsymbol{x}_b^{\mathrm{r}}  \label{eq:rEst} \\
	\widehat{\boldsymbol{x}}^{\mathrm{nr}} &\triangleq \frac{1}{B} \sum_{b=1}^B \frac{1}{N p_b} \boldsymbol{x}_b^{\mathrm{nr}} \label{eq:nrEst}
\end{align}

\begin{lemma}[\textbf{Variance of the mini-batch mean with and without replacement}] \label{lem:varMiniB}
	Both estimators are unbiased and it holds that:
	\begin{align}
		\mathbb{E}{\widehat{\boldsymbol{x}}^{\mathrm{r}}} = \mathbb{E}{\widehat{\boldsymbol{x}}^{\mathrm{nr}}} &= \overline{x} ,\\
		\mathbb{E}{\| \widehat{\boldsymbol{x}}^{\mathrm{r}} - \overline{x} \|}^2  &= \frac{1}{B} \sum_{n=1}^N p_n \left(\frac{1}{N^2 p_n^2}\sigma_n^2 + {\left\| \frac{1}{Np_n}\overline{x}_n - \overline{x}\right\|}^2 \right), \\	
		\mathbb{E}{\left\| \widehat{\boldsymbol{x}}^{\mathrm{nr}} - \overline{x} \right\|}^2  &= \frac{1}{B} \sum_{n=1}^N  p_n\left( \frac{1}{N^2p_n^2}\sigma_n^2  +   \left\| \frac{1}{Np_n} \overline{x}_n - \overline{x}  \right\|^2 \right) \notag \\
		&+ \frac{1}{B^2}\sum_{n_1\neq n_2}\mathbb{P}( \mathbb{I}_{n_1}= 1, \mathbb{I}_{n_2}= 1) \notag \\
		&	\times \left(\frac{1}{Np_{n_1}}\overline{x}_{n_1} - \overline{x}\right)  \left(\frac{1}{Np_{n_2}}\overline{x}_{n_2} - \overline{x}\right),
	\end{align}
	where the notation $\mathbb{I}_n$ evaluates to one if $x_n\in{\cal B}^{nr}$ and is zero otherwise. 
\end{lemma}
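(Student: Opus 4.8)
The plan is to separate the two sources of randomness throughout: the sampling mechanism (which indices enter the mini-batch) and the intrinsic randomness of the variables $\boldsymbol{x}_n$, which are mutually independent and independent of the sampling. I would linearize both estimators with selection indicators. For the with-replacement case I treat the $B$ draws as i.i.d.: each draw independently picks index $n$ with probability $p_n$ and records a fresh realization, so that $\widehat{\boldsymbol{x}}^{\mathrm{r}} = \frac{1}{B}\sum_{b=1}^B \boldsymbol{z}_b$ with $\boldsymbol{z}_b = \frac{1}{Np_{n_b}}\boldsymbol{x}_{n_b}$ i.i.d. For the without-replacement case I index by the population and set $\mathbb{I}_n = \mathds{1}\{x_n \in \mathcal{B}^{\mathrm{nr}}\}$, so that $\mathbb{E}\,\mathbb{I}_n = Bp_n$ (the inclusion probability) and, crucially, $\sum_{n=1}^N \mathbb{I}_n = B$.

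The first step is unbiasedness. Using independence of the indicators from the values together with $\mathbb{E}\boldsymbol{x}_n = \overline{x}_n$, a single with-replacement draw satisfies $\mathbb{E}\boldsymbol{z}_b = \sum_n p_n \frac{1}{Np_n}\overline{x}_n = \frac{1}{N}\sum_n \overline{x}_n = \overline{x}$, and likewise $\mathbb{E}\widehat{\boldsymbol{x}}^{\mathrm{nr}} = \frac{1}{B}\sum_n (\mathbb{E}\,\mathbb{I}_n)\frac{1}{Np_n}\overline{x}_n = \overline{x}$. The second step is the with-replacement variance: the i.i.d. structure kills all cross terms, leaving $\mathbb{E}\|\widehat{\boldsymbol{x}}^{\mathrm{r}}-\overline{x}\|^2 = \frac{1}{B}\,\mathbb{E}\|\boldsymbol{z}_1-\overline{x}\|^2$. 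I would evaluate the single-draw second moment by conditioning on the selected index and then applying a bias--variance split, $\mathbb{E}\|\frac{1}{Np_n}\boldsymbol{x}_n - \overline{x}\|^2 = \frac{1}{N^2 p_n^2}\sigma_n^2 + \|\frac{1}{Np_n}\overline{x}_n - \overline{x}\|^2$, which after weighting by $p_n$ yields the stated formula.

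The third step, the without-replacement variance, is where the real work lies, since distinct inclusion indicators are now dependent. The key device is to absorb the target into the indicator sum through $\overline{x} = \frac{1}{B}\sum_n \mathbb{I}_n \overline{x}$, giving the centered form $\widehat{\boldsymbol{x}}^{\mathrm{nr}}-\overline{x} = \frac{1}{B}\sum_n \mathbb{I}_n\big(\frac{1}{Np_n}\boldsymbol{x}_n - \overline{x}\big)$. Expanding the squared norm produces a double sum. On the diagonal I use $\mathbb{I}_n^2 = \mathbb{I}_n$, independence of indicator and value, $\mathbb{E}\,\mathbb{I}_n = Bp_n$, and the same bias--variance split to recover exactly the first (single-sum) term. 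Off the diagonal, factoring out the sampling and invoking mutual independence of $\boldsymbol{x}_{n_1}, \boldsymbol{x}_{n_2}$ splits the expectation into $\mathbb{P}(\mathbb{I}_{n_1}=1,\mathbb{I}_{n_2}=1)$ times $\big(\frac{1}{Np_{n_1}}\overline{x}_{n_1}-\overline{x}\big)^{\tran}\big(\frac{1}{Np_{n_2}}\overline{x}_{n_2}-\overline{x}\big)$, which is precisely the correction term.

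The main obstacle is the bookkeeping of dependence in the without-replacement case, and specifically spotting the substitution $\overline{x} = \frac{1}{B}\sum_n \mathbb{I}_n \overline{x}$: without it the off-diagonal contributions appear uncentered and collapse to the claimed centered form only after invoking $\sum_n \mathbb{I}_n = B$ and $\sum_n p_n = 1$. The one modeling point I would state explicitly is that with replacement each of the $B$ draws is an independent realization, so that repeated indices do not share a realization; this is exactly what makes the with-replacement cross terms vanish and produces the clean $1/B$ scaling, whereas the without-replacement correction survives through the joint inclusion probability.
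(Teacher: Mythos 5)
Your proposal is correct and follows essentially the same route as the paper's own proof: i.i.d. draws plus a bias--variance split for the with-replacement case, and indicator linearization with $\mathbb{E}\,\mathbb{I}_n = Bp_n$, $\mathbb{I}_n^2=\mathbb{I}_n$, and the joint inclusion probability for the without-replacement cross terms. The only difference is presentational --- you make explicit two points the paper leaves implicit, namely the centering identity $\overline{x}=\frac{1}{B}\sum_n \mathbb{I}_n\overline{x}$ (valid since $\sum_n\mathbb{I}_n=B$) and the modeling convention that repeated with-replacement draws yield fresh realizations, which is precisely what justifies the paper's claim that $\boldsymbol{x}_{b_1}$ and $\boldsymbol{x}_{b_2}$ are independent.
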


\begin{proof}
	
	We begin with the \textit{with-replacement} setting. The randomness of the samples introduces some intricacies that need to be accounted for in the notation. For the mean, we have:
	\begin{align}
	\mathbb{E} \widehat{\boldsymbol{x}}^{\mathrm{r}} &= \frac{1}{B} \sum_{b=1}^B \mathbb{E} \left(\frac{1}{Np_b}\boldsymbol{x}_b^{\mathrm{r}}\right) = \frac{1}{B} \sum_{b=1}^B \mathbb{E}\left\{ \mathbb{E} \left\{  \frac{1}{Np_b} \boldsymbol{x}_b^{\mathrm{r}} \bigg| \mathcal{S} \right\} \right\} \notag \\
	&= \frac{1}{B} \sum_{b=1}^B \mathbb{E}\left\{ \sum_{n=1}^N p_n  \frac{1}{Np_n}\boldsymbol{x}_n \right\} = \frac{1}{B} \sum_{b=1}^B  \overline{x}  = \overline{x}.
	\end{align}
	For the variance we find:
	\begin{align}
	& \mathbb{E} {\left\| \widehat{\boldsymbol{x}}^{\mathrm{r}} - \overline{x} \right\|}^2 \notag \\
	&=\: \mathbb{E} {\left\| \frac{1}{B} \sum_{b=1}^B  \frac{1}{Np_b}\boldsymbol{x}_b^{\mathrm{r}} - \overline{x} \right\|}^2, \notag \\
	&=\: \mathbb{E} {\left\| \frac{1}{B} \sum_{b=1}^B \left( \frac{1}{Np_b} \boldsymbol{x}_b^{\mathrm{r}} - \overline{x} \right) \right\|}^2, \notag \\
	&=\:  \frac{1}{B^2} \sum_{b=1}^B \mathbb{E} {\left\| \frac{1}{Np_b} \boldsymbol{x}_b^{\mathrm{r}} - \overline{x} \right\|}^2 \notag \\
	&\quad+ \frac{1}{B^2} \sum_{b_1 \neq b_2} \mathbb{E} \left\{ {\left( \frac{1}{Np_{b_1}}\boldsymbol{x}_{b_1} - \overline{x} \right)} {\left( \frac{1}{Np_{b_2}}\boldsymbol{x}_{b_2} - \overline{x} \right)} \right\}, \notag \\
	&\stackrel{(a)}{=}\:  \frac{1}{B^2} \sum_{b=1}^B \mathbb{E} {\left\| \frac{1}{Np_b} \boldsymbol{x}_b^{\mathrm{r}} - \overline{x} \right\|}^2 \notag \\
	&\quad+ \frac{1}{B^2} \sum_{b_1 \neq b_2} \mathbb{E} {\left\{\frac{1}{Np_{b_1}} \boldsymbol{x}_{b_1} - \overline{x} \right\}} \mathbb{E}{\left\{\frac{1}{Np_{b_2}} \boldsymbol{x}_{b_2} - \overline{x} \right\}}, \notag \\
	&\stackrel{(b)}{=}\:  \frac{1}{B^2} \sum_{b=1}^B \mathbb{E} {\left\| \frac{1}{Np_b} \boldsymbol{x}_b^{\mathrm{r}} - \overline{x} \right\|}^2,
	\end{align}
	where \( (a) \) is a result of the fact that the elements of \( \mathcal{S} \) are independent and \( \boldsymbol{x}_b^{\mathrm{r}} \) is sampled from \( \mathcal{S} \) independently, and hence \( \boldsymbol{x}_{b_1} \) and \( \boldsymbol{x}_{b_2} \) are independent. Step \( (b) \) then follows from:
	\begin{equation}
	\mathbb{E} \left(\frac{1}{Np_b} \boldsymbol{x}_{b}\right) = \mathbb{E} \left(\frac{1}{N} \sum_{n=1}^N \boldsymbol{x}_n \right)= \overline{x}.
	\end{equation}
	Then, 
	\begin{align}
	& \mathbb{E} {\left\| \widehat{\boldsymbol{x}}^{\mathrm{r}} - \overline{x} \right\|}^2 \notag \\
	&{=}\:  \frac{1}{B^2} \sum_{b=1}^B \mathbb{E} {\left\| \frac{1}{Np_b} \boldsymbol{x}_b^{\mathrm{r}} - \overline{x} \right\|}^2, \notag \\
	&{=}\:  \frac{1}{B^2} \sum_{b=1}^B \mathbb{E} \left\{ \mathbb{E} {\left\| \frac{1}{Np_b} \boldsymbol{x}_b^{\mathrm{r}} - \overline{x} \right\|}^2 \bigg| \mathcal{S} \right\}, \notag \\
	&{=}\:  \frac{1}{B^2} \sum_{b=1}^B \mathbb{E} \left\{  \sum_{n=1}^N {p_n \left\| \frac{1}{Np_n} \boldsymbol{x}_n - \overline{x} \right\|}^2 \right\}, \notag \\
	&{=}\:  \frac{1}{ B^2} \sum_{b=1}^B \sum_{n=1}^N p_n\mathbb{E}{\left\| \frac{1}{Np_n}\boldsymbol{x}_n - \overline{x} \right\|}^2, \notag \\
	&{=}\:  \frac{1}{ B} \sum_{n=1}^N p_n \mathbb{E}{\left\| \frac{1}{Np_n} \boldsymbol{x}_n - \frac{1}{Np_n}\overline{x}_n + \frac{1}{Np_n}\overline{x}_n - \overline{x}\right\|}^2 ,\notag \\
	&{=}\:  \frac{1}{B} \sum_{n=1}^N p_n\left(\mathbb{E}{\left\| \frac{1}{Np_n} \boldsymbol{x}_n - \frac{1}{Np_n}\overline{x}_n \right\|}^2 + {\left\| \frac{1}{Np_n} \overline{x}_n - \overline{x}\right\|}^2 \right), \notag \\
	&{=}\:  \frac{1}{B} \sum_{n=1}^N p_n\left(\frac{1}{N^2p_n^2}\sigma_n^2 + {\left\| \frac{1}{Np_n}\overline{x}_n - \overline{x}\right\|}^2 \right) .
	\end{align}
	We now proceed to study the efficiency of the \textit{without replacement} mini-batch mean. The fact that the \( \boldsymbol{x}_b \) are sampled from \( \mathcal{S} \) without replacement causes pairs \( \boldsymbol{x}_{b_1}, \boldsymbol{x}_{b_2} \) to no longer be independent. We denote the set of points sampled from \( \mathcal{S} \) \textit{without replacement} by \( \mathcal{B}^{\mathrm{nr}} \) and introduce the activation function by:
	\begin{equation}
	\mathbb{I}_n \triangleq \begin{cases} 1, \ \mathrm{if}\ \boldsymbol{x}_n \in \mathcal{B}^{\mathrm{nr}}, \\ 0, \ \mathrm{if}\ \boldsymbol{x}_n \notin \mathcal{B}^{\mathrm{nr}}. \end{cases}
	\end{equation}
	Then, the estimator \( \widehat{\boldsymbol{x}}^{\mathrm{nr}}  \) can be written equivalently as:
	\begin{equation}
	\widehat{\boldsymbol{x}}^{\mathrm{nr}} = \frac{1}{B} \sum_{n=1}^N \mathbb{I}_n \frac{1}{Np_n} \boldsymbol{x}_n.
	\end{equation}
	For the mean, we have:
	\begin{align}
	\mathbb{E}\widehat{\boldsymbol{x}}^{\mathrm{nr}} &= \frac{1}{B} \sum_{n=1}^N \mathbb{E}\left\{ \mathbb{I}_n \frac{1}{Np_n} \boldsymbol{x}_n \right\} = \frac{1}{B} \sum_{n=1}^N \mathbb{E} \mathbb{I}_n \times \mathbb{E} \frac{1}{Np_n}\boldsymbol{x}_n \notag \\
	&= \frac{1}{B} \sum_{n=1}^N Bp_n \times \frac{1}{Np_n} \overline{x}_n = \frac{1}{N} \sum_{n=1}^N  \overline{x}_n = \overline{x}.
	\end{align}
	For the variance, we have:
	\begin{align}
	\mathbb{E} \left\| \widehat{\boldsymbol{x}}^{\mathrm{nr}} - \overline{x} \right\|^2 \notag 
	=&\: \mathbb{E} \left\| \frac{1}{B} \sum_{n=1}^N \mathbb{I}_n \left( \frac{1}{Np_n} \boldsymbol{x}_n - \overline{x} \right) \right\|^2, \notag \\
	=&\: \frac{1}{B^2} \sum_{n=1}^N \mathbb{E} \left\|\mathbb{I}_n \left( \frac{1}{Np_n} \boldsymbol{x}_n - \overline{x} \right)\right\|^2 \notag \\
	&+ \frac{1}{B^2} \sum_{n_1 \neq n_2} \mathbb{E} \Bigg\{  \mathbb{I}_{n_1} \left( \frac{1}{Np_{n_1}}\boldsymbol{x}_{n_1} - \overline{x} \right) \mathbb{I}_{n_2}   \notag \\ 
	&\qquad \qquad \qquad \qquad \times \left( \frac{1}{Np_{n_2}} \boldsymbol{x}_{n_2} - \overline{x} \right)  \Bigg\}.
	\end{align}
	We begin with:
	\begin{align}
	&\mathbb{E} \left\|\mathbb{I}_n \left( \frac{1}{Np_n}\boldsymbol{x}_n - \overline{x} \right) \right\|^2 \notag \\
	=&\: \mathbb{E} \left\{ \left\|\mathbb{I}_n \left( \frac{1}{Np_n} \boldsymbol{x}_n - \overline{x} \right) \right\|^2 \bigg| \mathbb{I}_n = 1 \right\} \times \mathds{P} \left( \mathbb{I}_n = 1 \right)\notag \\
	&+ \mathbb{E} \left\{ \left\|\mathbb{I}_n \left( \frac{1}{Np_n}\boldsymbol{x}_n - \overline{x} \right) \right\|^2 \bigg| \mathbb{I}_n = 0 \right\} \times \mathds{P} \left( \mathbb{I}_n = 0 \right) ,\notag \\
	=&\: Bp_n \left( \mathbb{E} \left\| \frac{1}{Np_n} \boldsymbol{x}_n - \frac{1}{Np_n}\overline{x}_n + \frac{1}{Np_n} \overline{x}_n - \overline{x} \right\|^2  \right) ,\notag \\
	=&\: Bp_n \left( \frac{1}{N^2p^2_n} \mathbb{E} \| \boldsymbol{x}_n - \overline{x}_n\|^2 + \left\|  \frac{1}{Np_n} \overline{x}_n - \overline{x} \right\|^2  \right), \notag \\
	=&\: Bp_n \left( \frac{1}{N^2p^2_n} \sigma_n^2 + \left\|  \frac{1}{Np_n}\overline{x}_n - \overline{x} \right\|^2  \right).
	\end{align}
	For the cross-term we have:
	\begin{align}
	&\: \mathbb{E} \left\{  \mathbb{I}_{n_1} \left( \frac{1}{Np_{n_1}} \boldsymbol{x}_{n_1} - \overline{x} \right)   \mathbb{I}_{n_2} \left( \frac{1}{Np_{n_2}} \boldsymbol{x}_{n_2} - \overline{x} \right)  \right\} \notag \\
	=&\: \mathbb{E} \bigg\{   \left( \frac{1}{Np_{n_1}} \boldsymbol{x}_{n_1} - \overline{x} \right)\left(  \frac{1}{Np_{n_2}}\boldsymbol{x}_{n_2} - \overline{x} \right) \bigg| \mathbb{I}_{n_1} =1, \mathbb{I}_{n_2}=1 \bigg\} \notag \\
	& \times\mathds{P}\left( \mathbb{I}_{n_1} = 1, \mathbb{I}_{n_2} = 1 \right)
, \notag \\
	=&\: \mathbb{P} \left( \mathbb{I}_{n_2} =1, \mathbb{I}_{n_1} = 1 \right)\left( \frac{1}{Np_{n_1}}\mathbb{E} \boldsymbol{x}_{n_1} - \overline{x} \right) \left( \frac{1}{Np_{n_2}}\mathbb{E} \boldsymbol{x}_{n_2} - \overline{x} \right),  \notag \\
	=&\: \mathbb{P} \left(  \mathbb{I}_{n_2} =1, \mathbb{I}_{n_1} = 1 \right)  \left( \frac{1}{Np_{n_1}}\overline{x}_{n_1} - \overline{x} \right) \left( \frac{1}{Np_{n_2}}\overline{x}_{n_2} - \overline{x} \right) .
	\end{align}
	We then get the desired result.
\end{proof}

We note the following bound on the variance of sampling without replacement estimator. Using Jensen's inequality, we can get rid of the cross-term, and deduce from (26) the following inequality:
	\begin{equation}\label{eq:bdVarWOR}
		\mathbb{E} \left\| \widehat{\boldsymbol{x}}^{\mathrm{nr}} - \overline{x} \right\|^2 \leq \sum_{n=1}^N p_n \left ( \frac{1}{N^2p_n^2}\sigma_n^2 + \left\Vert \frac{1}{Np_n}\overline{x}_n - \overline{x}\right\Vert^2 \right).
	\end{equation}

\section{Proof of Lemma \ref{lemm:gradNoise}}\label{app:gradNoise}
\begin{proof}
		We start with the sampling \textit{with-replacement} construction. We have $K$ agents from which we sample $L$. Thus, $N$ and $B$ in Lemma \ref{lem:varMiniB} are $K$ and $L$, respectively. Also:
	\begin{align}
		\bm{x}_{k} &=  \widehat{\grad{w}P_k}(\w_{i-1}) ,	\\
		\overline{x}_k &= \grad{w}P_k(\w_{i-1}), 	\\
		\overline{x} &= \frac{1}{K} \sum_{k = 1}^K \grad{w}P_k(\w_{i-1}).
	\end{align}
	Then $\sigma_k^2$, which quantifies the second order moment of the local gradient noise,  becomes:
	\begin{align}
		\sigma_k^2 &= \mathbb{E} \left\{	\left \Vert \widehat{\grad{w}P_k}(\w_{i-1}) - \grad{w}P_k(\w_{i-1}) \right\Vert^2  \bigg| \w_{i-1} \right\},	\notag\\
		&=  \frac{1}{E_k^2 B_k^2} \sum_{e=1}^{E_k}\sum_{b \in \B{k}{e}} \mathbb{E} \Bigg\{ \bigg\Vert \frac{1}{N_kp_b^{(k)}} \grad{w}Q_k(\w_{i-1};\bm{x}_{k,b}) 	\notag \\
		& \qquad \qquad \qquad \qquad \qquad \quad- \grad{w}P_k(\w_{i-1})	\bigg\Vert^2 \bigg| \w_{i-1} \Bigg\}, \notag \\
		&= \frac{1}{E_kB_k^2} \sum_{b \in \B{k}{e}} \mathbb{E} \Bigg\{ \bigg\Vert \frac{1}{N_kp_b^{(k)}} \grad{w}Q_k(\w_{i-1};\bm{x}_{k,b}) \notag \\
		& - \frac{1}{N_kp_b^{(k)}}\grad{w}Q_k(w^o;\bm{x}_{k,b}) + \frac{1}{N_kp_b^{(k)}}\grad{w}Q_k(w^o;\bm{x}_{k,b}) \notag \\
		& - \grad{w}P_k(w^o)  + \grad{w}P_k(w^o)- \grad{w}P_k(\w_{i-1})	\bigg\Vert^2 \bigg | \w_{i-1} \Bigg\}, 	\notag  \\
		&\stackrel{(a)}{\leq} \frac{3}{E_kB_k^2} \sum_{b \in \B{k}{e}} \Bigg\{\mathbb{E} \Bigg \{ \bigg \Vert \frac{1}{N_kp_b^{(k)}} \grad{w}Q_k(\w_{i-1};\bm{x}_{k,b}) \notag \\
		&\qquad \qquad \qquad \qquad- \frac{1}{N_kp_b^{(k)}}\grad{w}Q_k(w^o;\bm{x}_{k,b})  \bigg\Vert^2 \bigg| \w_{i-1} \Bigg\} 	\notag \\
		&+ \mathbb{E} \left \{ \left \Vert \frac{1}{N_kp_b^{(k)}} \grad{w}Q_k(w^o;\bm{x}_{k,b}) - \grad{w}P_k(w^o) \right\Vert^2 \bigg| \w_{i-1} \right\}  \notag \\
		&+ \mathbb{E} \left\{ \left \Vert \grad{w}P_k(w^o)-\grad{w}P_k(\w_{i-1}) \right\Vert^2 \bigg| \w_{i-1} \right\} \Bigg\}, \notag \\
		&= \frac{3}{E_kB_k^2}\sum_{b \in \B{k}{e}} \Bigg\{ \sum_{n=1}^{N_k} p_n^{(k)} \bigg\Vert \frac{1}{N_kp_n^{(k)}} \grad{w}Q_k(\w_{i-1};x_{k,n}) 	\notag \\
		& \qquad \qquad \qquad \qquad \qquad- \frac{1}{N_kp_n^{(k)}} \grad{w}Q_k(w^o;x_{k,n})	\bigg\Vert^2 \notag \\
		& + \sum_{n=1}^{N_k}p_n^{(k)} \left \Vert \frac{1}{N_kp_n^{(k)}} \grad{w}Q_k(w^o;x_{k,n}) - \grad{w}P_k(w^o) \right\Vert^2 \notag \\
		& + 	\left \Vert \grad{w}P_k(w^o)-\grad{w}P_k(\w_{i-1}) \right\Vert^2 \Bigg\},	\notag \\
		&\stackrel{(b)}{\leq} \frac{3}{E_kB_k^2}\sum_{b \in \B{k}{e}} \Bigg\{ \left(1+\sum_{n=1}^{N_k} \frac{1}{N_k^2p_n^{(k)}}\right)\delta^2 \Vert \widetilde{\w}_{i-1} \Vert^2  \notag 
		\end{align}
	\begin{align}
		&+ \sum_{n=1}^{N_k} p_n^{(k)}\left \Vert \frac{1}{N_kp_n^{(k)}} \grad{w}Q_k(w^o;x_{k,n}) - \grad{w}P_k(w^o) \right\Vert^2   \Bigg\},\notag \\
		&\stackrel{(c)}{\leq} \frac{3\delta^2}{E_kB_k} \left( 1 + \frac{1}{N_k^2 } \sum_{n=1}^{N_k}\frac{1}{p_n^{(k)}} \right)\Vert \widetilde{\w}_{i-1}\Vert^2 \notag \\
		&+ \frac{6}{E_kB_kN_k^2} \sum_{n=1}^{N_k} \frac{1}{ p_n^{(k)}} \left\Vert  \grad{w}Q_k(w^o;x_{k,n})\right\Vert^2 \notag  \\
		&+ \frac{6}{E_kB_k}\left\Vert \grad{w}P_k(w^o)\right\Vert^2 \notag \\
		&= \beta_{s,k}^2 \Vert \widetilde{\w}_{i-1}\Vert^2 + \sigma_{s,k}^2 + \frac{6}{E_kB_k}\left\Vert \grad{w}P_k(w^o)\right\Vert^2 ,
	\end{align}		 
	where $(a)$ and $(c)$ follow from using Jensen's inequality, and $(b)$ follows from using the $\delta-$Lipschitz property of the gradients.  
	Thus, using Lemma \ref{lem:varMiniB}, we bound the stochastic noise variance as follows:
	\begin{align}
		&\mathbb{E}\left \{ \Vert \bm{s}_i \Vert^2 | \w_{i-1}\right\} = \frac{1}{L} \sum_{k=1}^K p_k \Bigg\{ \frac{1}{K^2 p_k^2}\sigma_k^2   \notag \\
		& + \bigg \Vert \frac{1}{Kp_k} \grad{w}P_k(\w_{i-1})- \frac{1}{K} \sum_{\ell=1}^K \grad{w}P_{\ell}(\w_{i-1}) \bigg\Vert^2 \Bigg\}.
	\end{align}
	We focus on the second term since the first term has already been bounded. Using Jensen's inequality in $(a)$ and Lipschitz condition of the gradients in $(b)$, we get:
	\begin{align}
		&\left \Vert \frac{1}{Kp_k} \grad{w}P_k(\w_{i-1}) - \frac{1}{K} \sum_{\ell=1}^K \grad{w}P_{\ell}(\w_{i-1}) \right\Vert^2  \notag \\
		&= \frac{1}{K^2}\bigg\Vert \frac{1}{p_k} \grad{w}P_k(\w_{i-1}) -\frac{1}{p_k} \grad{w}P_k(w^o) + \frac{1}{p_k} \grad{w}P_k(w^o) \notag \\
		& +  \sum_{\ell=1}^K \grad{w}P_{\ell}(w^o)- \sum_{\ell=1}^K \grad{w}P_{\ell}(\w_{i-1}) \bigg\Vert^2, \notag \\
		&\stackrel{(a)}{\leq} \frac{3}{K^2p^2_k}  \left\Vert \grad{w}P_k(\w_{i-1}) - \grad{w}P_k(w^o) \right\Vert^2   \notag \\
		&+ \frac{3}{K^2p^2_k} \left\Vert \grad{w}P_k(w^o)\right \Vert^2 \notag \\
		&+  \frac{3}{K}\sum_{\ell=1}^K\left \Vert  \grad{w}P_{\ell}(w^o)- \grad{w}P_{\ell}(\w_{i-1}) \right\Vert^2,  \notag \\
		&\stackrel{(b)}{\leq}  3\delta^2 \left(1+\frac{1}{K^2 p_k^2}\right)\Vert \widetilde{\w}_{i-1}\Vert^2 + \frac{3}{K^2p^2_k} \left\Vert \grad{w}P_k(w^o)\right \Vert^2.
	\end{align}
	Then, putting things together, we get:
	
	\begin{align}
		&\mathbb{E}\left \{ \Vert \bm{s}_i \Vert^2 | \w_{i-1}\right\} \notag \\
		&\leq \frac{1}{L}\sum_{k=1}^K p_k \Bigg\{\frac{\beta_{s,k}^2}{K^2p_k^2}   \Vert \widetilde{\w}_{i-1}\Vert^2 +  \frac{1}{K^2p_k^2}  \sigma_{s,k}^2  \notag \\ 
		&\qquad  \frac{6}{K^2p_k^2 E_kB_k} \Vert \grad{w}P_k(w^o)\Vert^2+ \frac{3}{K^2p^2_k} \left\Vert \grad{w}P_k(w^o)\right \Vert^2 \notag \\
		&\qquad  + 3\delta^2 \left(1+\frac{1}{K^2 p_k^2}\right)\Vert \widetilde{\w}_{i-1}\Vert^2  \Bigg\}, \notag \\
		&= \frac{1}{L}\sum_{k=1}^K \left(  \frac{\beta_{s,k}^2}{K^2p_k} + 3\delta^2 p_k + \frac{3\delta^2}{K^2 p_k} \right) \Vert \widetilde{\w}_{i-1}\Vert^2 \notag \\
		&+ \frac{1}{LK^2} \sum_{k=1}^K \frac{1}{p_k} \left\{ \sigma_{s,k}^2 + \left(3+\frac{6}{E_kB_k}\right)\Vert \grad{w}P_k(w^o)\Vert^2 \right\}, \notag \\
		&= \left( \frac{3\delta^2}{L}+\frac{1}{LK^2}\sum_{k=1}^K \frac{1}{p_k}\left(\beta_{s,k}^2 + 3\delta^2 \right)\right) \Vert \widetilde{\w}_{i-1}\Vert^2\notag \\
		&+ \frac{1}{LK^2} \sum_{k=1}^K \frac{1}{p_k} \left\{ \sigma_{s,k}^2 +\left(3+\frac{6}{E_kB_k}\right)\Vert \grad{w}P_k(w^o)\Vert^2 \right\}, \notag \\ 
		&= \beta_s^2 \Vert \widetilde{\w}_{i-1}\Vert^2 + \sigma_s^2 .
	\end{align}
	
	Next, we move to the sampling \textit{without replacement} construction. The variance $\sigma_k^2$ becomes:
	\begin{align}
		\sigma_k^2 &= \mathbb{E} \left\{	\left \Vert \widehat{\grad{w}P_k}(\w_{i-1}) - \grad{w}P_k(\w_{i-1}) \right\Vert^2  \bigg| \w_{i-1} \right\},	\notag\\
		&=  \mathbb{E} \Bigg\{	\bigg\Vert \frac{1}{E_kB_k}\sum_{e=1}^{E_k}\sum_{n=1}^{N_k}\mathbb{I}_n\frac{1}{N_kp_n^{(k)}} \grad{w}Q_k(\w_{i-1}; \bm{x}_n)  \notag \\
		&\qquad \quad- \grad{w}P_k(\w_{i-1}) \bigg\Vert^2  \bigg| \w_{i-1} \Bigg\},	\notag\\
		&=  \frac{1}{E_k^2 }\sum_{e=1}^{E_k} \mathbb{E} \Bigg\{ \bigg\Vert\frac{1}{B_k}\sum_{n=1}^{N_k}\mathbb{I}_n \frac{1}{N_kp_n^{(k)}} \grad{w}Q_k(\w_{i-1};\bm{x}_n) 	\notag \\
		&\qquad \qquad\qquad\quad- \grad{w}P_k(\w_{i-1}) \bigg\Vert^2  \bigg| \w_{i-1} \Bigg\},	\notag\\
		&= \frac{1}{E_kB_k^2} \sum_{n=1}^{N_k} \mathbb{E}\Bigg \{ \bigg \Vert \mathbb{I}_n\frac{1}{N_kp_n^{(k)}} \grad{w}Q_k(\w_{i-1};\bm{x}_n) \notag \\
		&\qquad\qquad \qquad \quad- \grad{w}P_k(\w_{i-1})  \bigg\Vert^2 \bigg| \w_{i-1} \Bigg\}, \notag \\
		&+ \frac{1}{E_kB_k^2}\sum_{n_1\neq n_2} \mathbb{E}\Bigg\{ \mathbb{I}_{n_1}\bigg( \frac{1}{N_kp_{n_1}^{(k)}} \grad{w}Q_k(\bm{w}_{i-1};\bm{x}_{n_1}) \notag \\
		&- \grad{w}P_k(\w_{i-1}) \bigg) \mathbb{I}_{n_2} \bigg( \frac{1}{N_kp_{n_2}^{(k)}} \grad{w}Q_k(\bm{w}_{i-1};\bm{x}_{n_2}) \notag \\
		&- \grad{w}P_k(\w_{i-1}) \bigg) \bigg| \w_{i-1}\Bigg\}.
	\end{align}
	Starting with the first term, we use Jensen's inequality in $(a)$ and $(c)$ and the Lipschitz condition in $(b)$ to get:
	
	\begin{align}
		&\frac{1}{E_kB_k^2}\sum_{n=1}^{N_k} \mathbb{P}(\mathbb{I}_n = 1)\mathbb{E}\Bigg \{ \bigg \Vert\frac{1}{N_kp_n^{(k)}} \grad{w}Q_k(\w_{i-1};x_{k,n}) \notag \\
		&\qquad\qquad- \grad{w}P_k(\w_{i-1})  \bigg\Vert^2 \bigg| \w_{i-1},  \mathbb{I}_n=1 \Bigg\}, \notag \\
		&= \frac{1}{E_kB_k}\sum_{n=1}^{N_k}p_n^{(k)} \bigg \Vert\frac{1}{N_kp_n^{(k)}} \grad{w}Q_k(\w_{i-1};x_{k,n}) \notag \\
		&\qquad\qquad\qquad \quad- \grad{w}P_k(\w_{i-1})  \bigg\Vert^2 , \notag \\
		&= \frac{1}{E_kB_k}\sum_{n=1}^{N_k}p_n^{(k)} \bigg\Vert \frac{1}{N_kp_n^{(k)}} \grad{w}Q_k(\w_{i-1};x_{k,n}) \notag \\
		& - \frac{1}{N_kp_n^{(k)}} \grad{w}Q_k(w^o;x_{k,n}) + \frac{1}{N_kp_n^{(k)}} \grad{w}Q_k(w^o;x_{k,n})  \notag \\
		& -  \grad{w}P_k(w^o) +  \grad{w}P_k(w^o) -  \grad{w}P_k(\w_{i-1}) \bigg\Vert^2 , \notag \\
		&\stackrel{(a)}{\leq}  \frac{3}{E_kB_k}\sum_{n=1}^{N_k} \frac{1}{N_k^2p_n^{(k)}} \Vert \grad{w}Q_k(\w_{i-1};x_{k,n}) \notag \\
		&- \grad{w}Q_k(w^o;x_{k,n})\Vert^2 \notag \\
		&+ p_n^{(k)}	 \left\Vert \frac{1}{N_kp_n^{(k)}} \grad{w}Q_k(w^o;x_{k,n}) -  \grad{w}P_k(w^o)  \right\Vert^2  \notag \\
		&+ p_n^{(k)} \left \Vert \grad{w}P_k(w^o) -  \grad{w}P_k(\w_{i-1}) \right\Vert^2, \notag \\
		&=\frac{3}{E_kB_k}\sum_{n=1}^{N_k}\Bigg\{ \frac{1}{N_k^2p_n^{(k)}} \Vert \grad{w}Q_k(\w_{i-1};x_{k,n}) \notag \\
		&- \grad{w}Q_k(w^o;x_{k,n})\Vert^2 \notag \\
		&+ p_n^{(k)}	 \left\Vert \frac{1}{N_kp_n^{(k)}} \grad{w}Q_k(w^o;x_{k,n}) -  \grad{w}P_k(w^o)  \right\Vert^2  \Bigg\} \notag \\
		&+ \frac{3}{E_kB_k}\left \Vert \grad{w}P_k(w^o) -  \grad{w}P_k(\w_{i-1}) \right\Vert^2, \notag \\
		&\stackrel{(b)}{\leq} \frac{3\delta^2}{E_kB_k} \left( 1+\frac{1}{N_k^2}\sum_{n=1}^{N_k}\frac{1}{p_n^{(k)}} \right) \Vert \widetilde{\w}_{i-1}\Vert^2  \notag \\
		&+ \frac{3}{E_kB_k}\sum_{n=1}^{N_k}p_n^{(k)} \left\Vert \frac{1}{N_kp_n^{(k)}} \grad{w}Q_k(w^o;x_{k,n}) -  \grad{w}P_k(w^o)  \right\Vert^2 ,\notag \\ 
		&\stackrel{(c)}{\leq} \frac{3\delta^2}{E_kB_k} \left( 1+\frac{1}{N_k^2}\sum_{n=1}^{N_k}\frac{1}{p_n^{(k)}} \right) \Vert \widetilde{\w}_{i-1}\Vert^2  \notag \\ 
		&+ \frac{6}{E_kB_k} \sum_{n=1}^{N_k}\frac{1}{N_k^2p_n^{(k)}}\Vert \grad{w}Q_k(w^o;x_{k,n})\Vert^2 \notag \\
		&+ \frac{6}{E_kB_k}\Vert \grad{w}P_k(w^o)\Vert^2, \notag \\
		&= \beta_{s,k}^2 \Vert \widetilde{\w}_{i-1}\Vert^2 + \sigma_{s,k}^2 + \frac{6}{E_kB_k} \Vert \grad{w}P_k(w^o)\Vert^2,
	\end{align}
	{\color{blue}The cross-term reduces to 0 by first conditionig over $\mathbb{I}_{n_1} = 1, \mathbb{I}_{n_2} = 1$ and then splittinng the expectation. Each of the two terms are zero.}
	Thus, putting everything together, we get:
	\begin{align}
		\sigma_k^2 \leq \beta_{s,k}^2 \Vert \widetilde{\w}_{i-1}\Vert^2 + \sigma_{s,k}^2 + \frac{6}{E_kB_k} \Vert \grad{w}P_k(w^o)\Vert^2.
	\end{align}
	Next, to bound the second order moment of the gradient noise, we use \eqref{eq:bdVarWOR}:
	\begin{align}
		&\mathbb{E}\left\{ \Vert \bm{s}_i \Vert^2 | \w_{i-1} \right\}  \leq \sum_{k=1}^K p_k \Bigg( \frac{1}{K^2 p_k^2} \sigma_k^2  \notag \\
		&+ \bigg\Vert \frac{1}{Kp_k} \grad{w}P_k(\w_{i-1}) - \frac{1}{K} \sum_{\ell=1}^K \grad{w}P_{\ell}(\w_{i-1}) \bigg\Vert^2 \Bigg). 
	\end{align}
	The first term is of the same form as for sampling with replacement, and thus can be bounded similarly: 
	\begin{align}
		&\mathbb{E}\{\Vert \bm{s}_i \Vert^2 | \w_{i-1}\}  \leq  \sum_{k=1}^K p_k\Bigg\{ \frac{\beta_{s,k}^2}{K^2p_k^2}\Vert \widetilde{\w}_{i-1}\Vert^2 + \frac{1}{K^2p_k^2}\sigma_{s,k}^2 \notag \\
		&+ \frac{6}{K^2p_k^2E_kB_k}\Vert \grad{w}P_k(w^o)\Vert^2  + \frac{3}{K^2p_k^2}\Vert \grad{w}P_k(w^o)\Vert^2  \notag \\
		&+   3\delta^2 \left(1+\frac{1}{K^2p_k^2}\right) \Vert \widetilde{\w}_{i-1} \Vert^2 \Bigg\}  \notag \\
		&= \beta_s^2\Vert \widetilde{\w}_{i-1}\Vert ^2 +  \sigma_s^2.
	\end{align}
\end{proof}

\section{Proof of Lemma \ref{lemm:convCentSol}}\label{app:convCentSol}

\begin{proof}
	We first note the following result by using $\frac{1}{K}\sum_{k=1}^K \grad{w}P_k(w^o) = 0$:
	\begin{align}\label{eq:prf-step}
		&\left\Vert \widetilde{\w}_{i-1} + \mu \frac{1}{K}\sum_{k=1}^K \grad{w}P_k(\w_{i-1}) \right\Vert^2 \notag \\
		&= \Vert \widetilde{\w}_{i-1}\Vert^2 + \mu^2 \left\Vert \frac{1}{K}\sum_{k=1}^K \grad{w}P_k(w^o)-\grad{w}P_k(\w_{i-1}) \right\Vert^2 \notag \\
		& \quad+ 2\mu \widetilde{\w}_{i-1}^\tran \frac{1}{K}\sum_{k=1}^K \grad{w}P_k(\w_{i-1}), \notag \\ 
		& \stackrel{(a)}{\leq}  \Vert \widetilde{\w}_{i-1}\Vert^2 + \mu^2 \frac{1}{K}\sum_{k=1}^K \left\Vert  \grad{w}P_k(w^o)-\grad{w}P_k(\w_{i-1}) \right\Vert^2 \notag \\
		& \quad+ 2\mu \widetilde{\w}_{i-1}^\tran \frac{1}{K}\sum_{k=1}^K \grad{w}P_k(\w_{i-1}), \notag \\ 
		&\stackrel{(b)}{\leq} (1+\mu^2 \delta^2) \Vert \widetilde{\w}_{i-1}\Vert^2 + 2\mu  \widetilde{\w}_{i-1}^\tran \frac{1}{K}\sum_{k=1}^K \grad{w}P_k(\w_{i-1}), \notag \\ 
		&\stackrel{(c)}{\leq} (1+\mu^2 \delta^2) \Vert \widetilde{\w}_{i-1}\Vert^2 \notag \\
		&\quad + 2\mu \frac{1}{K}\sum_{k=1}^K \left( P_k(w^o)- P_k(\w_{i-1}) - \frac{\nu}{2}\Vert \widetilde{\w}_{i-1}\Vert^2 \right), \notag \\ 
		&\stackrel{(d)}{\leq} (1+\mu^2 \delta^2) \Vert \widetilde{\w}_{i-1}\Vert^2 -2\mu \frac{1}{K}\sum_{k=1}^K \nu \Vert \widetilde{\w}_{i-1}\Vert^2 , \notag \\
		&= (1-2\mu\nu + \mu^2\delta^2)\Vert  \widetilde{\w}_{i-1}\Vert^2,
	\end{align}
	where $(a)$ follows from Jensen's inequality, $(b)$ from the Lipschitz condition, and $(c)$ and $(d)$ from strong convexity condition.
	
	Returning to the main expression:
	\begin{equation}
		\widetilde{\w}_{i-1} + \mu \frac{1}{K}\sum_{k=1}^K \grad{w}P_k(\w_{i-1})+ \mu \bm{s}_i,
	\end{equation} 
 and taking  conditional expectations, we obtain:
	\begin{align}
		& \mathbb{E} \left\{ \left\Vert \widetilde{\w}_{i-1} + \mu \frac{1}{K}\sum_{k=1}^K \grad{w}P_k(\w_{i-1})+ \mu \bm{s}_i \right\Vert^2 \Bigg | \w_{i-1} \right\} , \notag \\
		&\stackrel{(a)}{=} \mathbb{E} \left\{  \left\Vert \widetilde{\w}_{i-1} + \mu \frac{1}{K}\sum_{k=1}^K \grad{w}P_k(\w_{i-1}) \right\Vert^2 \big | \w_{i-1}\right\} \notag \\
		& \quad+ \mu^2 \mathbb{E} \left\{ \Vert \bm{s}_i\Vert^2 \big | \w_{i-1} \right\} , \notag \\
		&\stackrel{(b)}{\leq} (1-2\mu\nu + \mu^2\delta^2)\Vert  \widetilde{\w}_{i-1}\Vert^2 \notag \\
		& \quad + \mu^2 \left( \beta_s^2 \Vert \widetilde{\w}_{i-1} \Vert^2 + \eta_s \Vert \widetilde{\w}_{i-1} \Vert + \sigma_s^2 \right) , 
	\end{align}
	where the cross-term in $(a)$ is zero because of the zero mean property of the gradient noise, and $(b)$ follows from \eqref{eq:prf-step} and using the bound on the second order moment of the gradient noise.
	
	Next, taking expectation again to remove the conditioning we get:
	\begin{align}
		&\mathbb{E}\left\Vert \widetilde{\w}_{i-1}+ \mu \frac{1}{K}\sum_{k=1}^K \grad{w}P_k(\w_{i-1})+ \mu \bm{s}_i\right\Vert ^2 \notag \\
		&\leq \left( 1-2\mu\nu + \mu^2(\delta^2+\beta_s^2)\right) \mathbb{E}\Vert  \widetilde{\w}_{i-1}\Vert^2 + \mu^2 \eta_s \mathbb{E}\Vert \widetilde{\w}_{i-1} \Vert   \notag \\
		&\quad + \mu^2 \sigma_s^2 .
	\end{align}
\end{proof}

\section{Proof of Lemma \ref{lem:locGradNoise}}\label{app:locGradNoise}
\begin{proof}
	To show the mean is zero, it is enough to calculate the mean of the approximate gradient. We start with the sampling with replacement scheme where the samples are chosen independently from each other:
	\begin{align}
		&\mathbb{E}\left \{ \frac{1}{B_k}\sum_{b\in \B{k}{e}} \frac{1}{N_kp_b^{(k)}} \grad{w}Q_k(\w_{k,e-1};\bm{x}_{k,b}) \big| \mathcal{F}_{e-1}, \Li \right\} \notag\\
		&= \frac{1}{B_k} \sum_{b \in \B{k}{e}} \mathbb{E}\left \{\frac{1}{N_k p_b^{(k)}}  \grad{w}Q_k(\w_{k,e-1};\bm{x}_{k,b}) \big| \mathcal{F}_{e-1}, \Li \right\} , \notag\\
		&= \frac{1}{B_k} \sum_{b \in \B{k}{e}} \sum_{n=1}^{N_k} \frac{1}{N_k p_n^{(k)}}  \grad{w}Q_k(\w_{k,e-1};\bm{x}_{k,n}) , \notag \\
		&= \frac{1}{N_k}\sum_{n=1}^{N_k} \grad{w}Q_k(\w_{k,e-1};\bm{x}_{k,n}).
	\end{align}
	As for the sampling without replacement scheme, since the samples are now dependent, we introduce the indicator function $\mathbb{I}_n$ and the derivation goes as follows:
	\begin{align}
		&\mathbb{E}\left \{ \frac{1}{B_k}\sum_{b\in \B{k}{e}} \frac{1}{N_kp_b^{(k)}} \grad{w}Q_k(\w_{k,e-1};\bm{x}_{k,b}) \big| \mathcal{F}_{e-1}, \Li \right\} \notag\\
		&= \mathbb{E}\left \{ \frac{1}{B_k} \sum_{n=1}^{N_k}  \frac{\mathbb{I}_n}{N_k p_n^{(k)}}  \grad{w}Q_k(\w_{k,e-1};\bm{x}_{k,n}) \big| \mathcal{F}_{e-1}, \Li \right\} , \notag\\
		&= \frac{1}{B_k}\sum_{n=1}^{N_k} \frac{\mathbb{P}(\mathbb{I}_n = 1)}{N_k p_n^{(k)}}  \grad{w}Q_k(\w_{k,e-1};\bm{x}_{k,n}) , \notag \\
		&= \frac{1}{N_k}\sum_{n=1}^{N_k} \grad{w}Q_k(\w_{k,e-1};\bm{x}_{k,n}).
	\end{align}
	Next, to bound the second order moment, we start with an intermediate step and bound the second order moment of the individual gradient noise of one sample. The derivation below holds regardless of the sampling scheme. By adding and subtracting $\frac{1}{N_kp_n^{(k)}}\grad{w}Q_k(w_k^o;\bm{x}_{k,n})$, adding $\grad{w}P_k(w_k^o) = 0$, and then using Jensen's inequality and Lipschitz condition, we get:
	\begin{align}
		&\left\Vert \frac{1}{N_kp_n^{(k)}} \grad{w}Q_k(\w_{k,e-1};\bm{x}_{k,n}) - \grad{w}P_k(\w_{k,e-1}) \right\Vert^2 \notag \\
		&\leq 3 \Bigg\Vert \frac{1}{N_kp_n^{(k)}} \grad{w}Q_k(\w_{k,e-1};\bm{x}_{k,n}) \notag \\
		&\quad - \frac{1}{N_kp_n^{(k)}}\grad{w}Q_k(w_k^o;\bm{x}_{k,n})  \Bigg\Vert^2\notag \\
		&\quad +3\left\Vert \frac{1}{N_kp_n^{(k)}}\grad{w}Q_k(w_k^o;\bm{x}_{k,n}) \right\Vert^2  + 3\delta^2 \Vert \widetilde{\w}_{k,e-1}\Vert^2.
	\end{align}
	Then, taking the conditional expectation and using the Lipschitz property, we get:
	\begin{align}
		&\mathbb{E}\Bigg\{  \Bigg\Vert \frac{1}{N_kp_n^{(k)}} \grad{w}Q_k(\w_{k,e-1};\bm{x}_{k,n}) \notag \\
		&\quad - \grad{w}P_k(\w_{k,e-1}) \Bigg\Vert^2 \Bigg| \mathcal{F}_{e-1}, \Li \Bigg\} \notag \\
		&\leq \sum_{n=1}^{N_k} \frac{3p_n^{(k)}}{N_k^2\left(p_n^{(k)}\right)^2} \big(\Vert \grad{w}Q_k(\w_{k,e-1};x_{k,n}) \notag \\
		&\quad -\grad{w}Q_k(w_{k}^o;x_{k,n})\Vert^2+ \left\Vert\grad{w}Q_k(w_k^o;x_{k,n}) \right\Vert^2 \big) \notag \\
		&\quad+ 3\delta^2 \Vert \widetilde{\w}_{k,e-1}\Vert^2, \notag \\
		&\leq \sum_{n=1}^{N_k} \frac{3}{N_k^2p_n^{(k)}} \left(\delta^2\Vert \widetilde{\w}_{k,e-1}\Vert^2 +\left\Vert\grad{w}Q_k(w_k^o;x_{k,n}) \right\Vert^2 \right) \notag \\
		&\quad +  3\delta^2 \Vert \widetilde{\w}_{k,e-1}\Vert^2.
	\end{align}
	Now going back to calculating the second order moment of the local incremental gradient noise, we first start with the sampling with replacement. Using the fact that the samples are independent we get:
	\begin{align}
		&\mathbb{E}\left\{ \Vert \bm{q}_{k,i,e} \Vert^2 \big| \mathcal{F}_{e-1}, \Li\right\}\notag \\
		&= \frac{1}{K^2p_k^2B_k^2}\sum_{b \in \B{k}{e}} \mathbb{E}\Bigg\{  \Bigg\Vert \frac{1}{N_kp_b^{(k)}} \grad{w}Q_k(\w_{k,e-1};\bm{x}_{k,b}) \notag \\
		&\quad - \grad{w}P_k(\w_{k,e-1}) \Bigg\Vert^2 \Bigg| \mathcal{F}_{e-1}, \Li \Bigg\} \notag \\
	&\leq \frac{3\delta^2}{K^2p_k^2B_k} \left( 1  + \frac{1}{N_k^2}\sum_{n=1}^{N_k} \frac{1}{p_n^{(k)}} \right) \Vert \widetilde{ \w}_{k,e-1}\Vert^2 \notag \\
		&+ \frac{3}{K^2p_k^2 B_kN_k^2}\sum_{n=1}^{N_k} \frac{1}{p_n^{(k)}}\Vert \grad{w}Q_k(w_k^o;x_{k,n})\Vert^2.
	\end{align}
	As for the sampling without replacement, we also introduce the indicator function and write out the square of sums. The cross-terms disappear since each term has zero mean. The derivation then follows similarly to that of the sampling with replacement. More formally:
	\begin{align}
	&\mathbb{E}\left\{ \Vert \bm{q}_{k,i,e} \Vert^2 \big| \mathcal{F}_{e-1}, \Li\right\}\notag \\
		&= \frac{1}{K^2p_k^2B_k^2}\sum_{n=1}^{N_k}\mathbb{P}(\mathbb{I}_n = 1)\mathbb{E}\Bigg\{  \Bigg\Vert \frac{1}{N_kp_n^{(k)}} \grad{w}Q_k(\w_{k,e-1};\bm{x}_{k,n}) \notag \\
		&\quad - \grad{w}P_k(\w_{k,e-1}) \Bigg\Vert^2 \Bigg| \mathbb{I}_n = 1,\mathcal{F}_{e-1}, \Li \Bigg\} \notag \\ 
		&= \frac{1}{K^2p_k^2B_k}\sum_{n=1}^{N_k}p_n^{(k)}  \Bigg\Vert \frac{1}{N_kp_n^{(k)}} \grad{w}Q_k(\w_{k,e-1};x_{k,n}) \notag \\
		&\quad - \grad{w}P_k(\w_{k,e-1}) \Bigg\Vert^2 \notag \\ 
		&\leq \frac{3\delta^2}{K^2p_k^2B_k} \left( 1  + \frac{1}{N_k^2}\sum_{n=1}^{N_k} \frac{1}{p_n^{(k)}} \right) \Vert \widetilde{ \w}_{k,e-1}\Vert^2 \notag \\
		&+ \frac{3}{K^2p_k^2 B_kN_k^2}\sum_{n=1}^{N_k} \frac{1}{p_n^{(k)}}\Vert \grad{w}Q_k(w_k^o;x_{k,n})\Vert^2.
	\end{align}
	
\end{proof}

\section{Proof of Lemma \ref{lem:von-loc-inc-step}}\label{app:von-loc-inc-step}

\begin{proof}
	We subtract $w_k^o$ from both sides of \eqref{eq:localUp} and use  \eqref{eq:locGradNoise} to get:
	\begin{align}\label{eq:lem-prf-locErrRec}
		\widetilde{\w}_{k,e} &= \widetilde{\w}_{k,e-1} + \mu \grad{w}P_k(\w_{k,e-1}) + \mu \bm{q}_{k,i,e}.
	\end{align}
	We bound the first two terms and use the fact that $\grad{w}P_k(w_k^o) = 0$, Lipschitz condition, and the convexity of the cost function:
	\begin{align}
		&\left\Vert \widetilde{\w}_{k,e-1} + \mu \grad{w}P_k(\w_{k,e-1}) \right\Vert^2 \notag \\
		&\quad+ \mu^2 \Vert \grad{w}P_k(\w_{k,e-1})\Vert^2 , \notag\\
		&= \Vert \widetilde{\w}_{k,e-1}\Vert^2 + 2\mu\widetilde{\w}_{k,e-1}^\tran \grad{w}P_k(\w_{k,e-1}) \notag \\
		&\quad+ \mu^2 \Vert \grad{w}P_k(w_{k}^o)- \grad{w}P_k(\w_{k,e-1})\Vert^2 , \notag\\
		&\leq (1+\mu^2 \delta^2)\Vert \widetilde{\w}_{k,e-1}\Vert^2 + 2\mu\widetilde{\w}_{k,e-1}^\tran \grad{w}P_k(\w_{k,e-1}), \notag \\
		&\leq (1-2\nu\mu + \mu^2 \delta^2) \Vert \widetilde{\w}_{k,e-1}\Vert^2.
	\end{align}
	Returning to \eqref{eq:lem-prf-locErrRec}, squaring both sides, conditioning on the filtration $\mathcal{F}_{e-1}$, and taking expectations we obtain:
	\begin{align}
		&\mathbb{E} \left\{ \Vert \widetilde{\w}_{k,e} \Vert^2 \big| \mathcal{F}_{e-1} \right\} \notag \\
		&\stackrel{(a)}{=}   \mathbb{E} \left\{ \Vert \widetilde{\w}_{k,e-1} + \mu \grad{w}P_k(\w_{k,e-1}) \Vert^2 \big| \mathcal{F}_{e-1} \right\} \notag \\
		&\quad + \mu^2 \mathbb{E} \left\{ \Vert \bm{q}_{k,i,e}^2 \Vert^2 \big| \mathcal{F}_{e-1} \right\} , \notag \\
		&\leq \left (1-2\nu\mu + \mu^2 \left(\delta^2 + \frac{E_k}{K^2p_k^2}\beta^2_{s,k}\right) \right) \Vert \widetilde{\w}_{k,e-1}\Vert^2  \notag \\
		&\quad + \mu^2 \frac{1}{K^2p_k^2}\sigma_{q,k}^2,
	\end{align}
	where the cross term in $(a)$ is zero because of the zero mean property of the local incremental gradient noise. Taking expectations on both sides again removes the condition on the filtration and leads to the desired result. By further iterating recursion \eqref{eq:thrm-loc-MSD-rec} we obtain:
	\begin{equation}
		\mathbb{E}\Vert \widetilde{\w}_{k,e}\Vert^2 \leq \lambda_{k}^{e} \mathbb{E}\Vert \widetilde{\w}_{k,0}\Vert^2 + \frac{1-\lambda_k^e}{1-\lambda_k}\mu^2\sigma_{q,k}^2.
	\end{equation}	 
\end{proof}  

\section{Proof of Lemma \ref{lemm:incNoise}} \label{app:incNoise}
\begin{proof}
	
	First, using Jensen's inequality $(a)$ and Lipschitz continuity $(b)$, we obtain:
	\begin{align}\label{eq:bd-qi}
		\Vert \bm{q}_i \Vert^2 &\stackrel{(a)}{\leq} \frac{1}{L}\sum_{\ell \in \Li} \frac{1}{K^2p_{\ell}^2E_{\ell}B_{\ell}} \sum_{e=1}^{E_{\ell}} \sum_{b\in \B{\ell}{e}} \frac{1}{N^2_{\ell}\left (p_{b}^{(\ell)} \right)^2}  \notag \\
		&\quad \times\left\Vert\grad{w}Q_{\ell}(\w_{\ell,e-1};\bm{x}_{\ell,b}) - \grad{w}Q_{\ell}(\w_{i-1};\bm{x}_{\ell,b}) \right\Vert^2, \notag \\
		&\stackrel{(b)}{\leq} \frac{\delta^2}{L}\sum_{\ell \in \Li} \frac{1}{K^2p_{\ell}^2E_{\ell}B_{\ell}} \sum_{e=1}^{E_{\ell}} \sum_{b\in \B{\ell}{e}} \frac{\Vert \w_{i-1} - \w_{\ell,e-1}\Vert^2}{N_{\ell}^2 \left(p_{b}^{(\ell)} \right)^2}.
	\end{align}
	Next, we focus on $\Vert \w_{i-1}-\w_{\ell,e-1}\Vert^2$, and by applying Jensen's inequality in $(a)$ and $(b)$ and Lipschitz condition in $(c)$ we obtain:
	
	\begin{align}
		&\Vert \w_{i-1}-\w_{\ell,e-1}\Vert^2 \notag \\
		&= \mu^2 \left\Vert \frac{1}{E_{\ell}B_{\ell}}\sum_{f=0}^{e-2} \sum_{b\in \B{\ell}{f}} \frac{1}{N_{\ell}p_b^{(\ell)}} \grad{w}Q_{\ell}(\w_{\ell,f};\bm{x}_{\ell,b}) \right\Vert^2, \notag \\
		&\stackrel{(a)}{\leq} \frac{\mu^2}{E_{\ell}B_{\ell}}\sum_{f=0}^{e-2} \sum_{b\in \B{\ell}{f}} \frac{1}{N_{\ell}^2 \left(p_b^{(\ell)}\right)^2} \Vert  \grad{w}Q_{\ell}(\w_{\ell,f};\bm{x}_{\ell,b}) \notag \\
		& \quad - \grad{w}Q_{\ell}(w_{\ell}^o;\bm{x}_{\ell,b}) + \grad{w}Q_{\ell}(w_{\ell}^o;\bm{x}_{\ell,b}) \Vert^2, \notag \\
		&\stackrel{(b)}{\leq} \frac{2\mu^2}{E_{\ell}B_{\ell}}\sum_{f=0}^{e-2} \sum_{b\in \B{\ell}{f}} \frac{1}{N_{\ell}^2 \left(p_b^{(\ell)}\right)^2} \big( \Vert  \grad{w}Q_{\ell}(\w_{\ell,f};\bm{x}_{\ell,b})\notag \\
		& \quad - \grad{w}Q_{\ell}(w_{\ell}^o;\bm{x}_{\ell,b})\Vert^2 + \Vert\grad{w}Q_{\ell}(w_{\ell}^o;\bm{x}_{\ell,b}) \Vert^2 \big), \notag \\
		&\stackrel{(c)}{\leq} \frac{2\mu^2}{E_{\ell}B_{\ell}}\sum_{f=0}^{e-2} \sum_{b\in \B{\ell}{f}} \frac{1}{N_{\ell}^2 \left(p_b^{(\ell)}\right)^2} \Big( \delta^2\Vert  \widetilde{\w}_{\ell,f}\Vert^2\notag \\
		& \quad + \Vert\grad{w}Q_{\ell}(w_{\ell}^o;\bm{x}_{\ell,b}) \Vert^2 \Big). \notag  
	\end{align}
	Then, taking the expectation given the previous filtration $\mathcal{F}_{e-2}$ and the participating agents $\Li$, we see that:
	\begin{align}
		&\mathbb{E}\left\{ \Vert \w_{i-1} - \w_{\ell,e-1}\Vert^2 \big| \mathcal{F}_{e-2}, \Li \right\}  \notag \\
		&\leq \frac{2\mu^2\delta^2}{E_{\ell}B_{\ell}} \sum_{f=0}^{e-2} \Vert \widetilde{\w}_{\ell,f}\Vert^2 \mathbb{E}\left\{ \sum_{b \in \B{\ell}{f}} \frac{1}{N_{\ell}^2 \left( p_b^{(\ell)}\right)^2} \Bigg| \mathcal{F}_{e-2}, \Li \right\} \notag \\
		&\quad +  \frac{2\mu^2}{E_{\ell}B_{\ell}} \sum_{f=0}^{e-2} \mathbb{E}\left\{ \sum_{b\in \B{\ell}{f}} \frac{\Vert \grad{w}Q_{\ell}(w_{\ell}^o;\bm{x}_{\ell,b})\Vert^2}{N_{\ell}^2 \left(p_b^{(\ell)}\right)^2}  \Bigg| \mathcal{F}_{e-2}, \Li \right\} \\
		&= \frac{2\mu^2\delta^2}{E_{\ell}B_{\ell}} \sum_{f=0}^{e-2} \Vert \widetilde{\w}_{\ell,f}\Vert^2 \sum_{n=1}^{N_{\ell}} \frac{\mathbb{P}(\mathbb{I}_n=1)}{N_{\ell}^2 \left( p_n^{(\ell)}\right)^2} \notag \\
		&\quad + \frac{2\mu^2}{E_{\ell}B_{\ell}} \sum_{f=0}^{e-2}\sum_{n=1}^{N_{\ell}} \frac{\mathbb{P}(\mathbb{I}_n = 1)}{N_{\ell}^2 \left( p_n^{(\ell)}\right)^2} \Vert \grad{w}Q_{\ell}(w_{\ell}^o;x_{\ell,n})\Vert^2,  \notag \\
		&=\frac{2\mu^2 \delta^2}{E_{\ell}} \sum_{f=0}^{e-2} \Vert \widetilde{\w}_{\ell,f}  \Vert^2 \sum_{n=1}^{N_{\ell}}\frac{1}{N_{\ell}^2  p_n^{(\ell)}}+  \frac{2\mu^2  (e-1)}{3E_{\ell}} \sigma_{q,\ell}^2.
	\end{align}
	Then, taking expectation again over the filtration, we obtain:
	\begin{align}
		&\mathbb{E} \left\{ \Vert \w_{i-1} - \w_{\ell,e-1}\Vert^2 \big| \Li \right\} \notag \\
		&\leq \frac{2\mu^2\delta^2}{E_{\ell}} \sum_{f=0}^{e-1} \mathbb{E} \left\{ \Vert \widetilde{\w}_{\ell,f}\Vert ^2 \big| \Li \right\} \sum_{n=1}^{N_{\ell}} \frac{1}{N_{\ell}^2 p_n^{(\ell)}} + \frac{2\mu^2  (e-1)}{3E_{\ell}}\sigma_{q,\ell}^2,  \notag \\
		&\stackrel{(a)}{\leq} 	\frac{2\mu^2\delta^2}{E_{\ell}} \sum_{n=1}^{N_{\ell}} \frac{1}{N_{\ell}^2 p_n^{(\ell)}} \sum_{f=0}^{e-1} \Bigg( \lambda_{\ell}^f \mathbb{E} \left\{ \Vert \widetilde{\w}_{\ell,0}\Vert ^2 \big| \Li \right\}  \notag \\
		&\quad+ \frac{\mu^2}{K^2p_{\ell}^2}\frac{1-\lambda_{\ell}^f}{1-\lambda_{\ell}}\sigma_{q,\ell}^2 \Bigg) + \frac{2\mu^2(e-1)}{3E_{\ell}}\sigma_{q,\ell}^2, \notag \\
		&= \frac{2\mu^2\delta^2}{E_{\ell}} \sum_{n=1}^{N_{\ell}} \frac{1}{N_{\ell}^2 p_n^{(\ell)}} \Bigg( \frac{1-\lambda_{\ell}^e}{1-\lambda_{\ell}}\mathbb{E} \left\{ \Vert \widetilde{\w}_{\ell,0}\Vert ^2 \big| \Li \right\} \notag \\
		&\quad +\frac{\mu^2}{K^2p_k^2}\frac{e(1-\lambda_{\ell}) - 1 + \lambda_{\ell}^e}{(1-\lambda_{\ell})^2}\sigma_{q,\ell}^2 \Bigg)+\frac{2\mu^2(e-1)}{3E_{\ell}}\sigma_{q,\ell}^2, \notag \\
		&\stackrel{(b)}{\leq} \frac{2\mu^2\delta^2}{E_{\ell}} \sum_{n=1}^{N_{\ell}} \frac{1}{N_{\ell}^2 p_n^{(\ell)}} \Bigg( 2\frac{1-\lambda_{\ell}^e}{1-\lambda_{\ell}}\mathbb{E} \left\{ \Vert \widetilde{\w}_{i-1}\Vert ^2 \big| \Li \right\}  \notag \\
		&\quad + 2\frac{1-\lambda_{\ell}^e}{1-\lambda_{\ell}}\mathbb{E} \left\{ \Vert w^o - w_{\ell}^o \Vert ^2 \big| \Li \right\} \notag \\
		&\quad +\frac{\mu^2}{K^2p_k^2}\frac{e(1-\lambda_{\ell}) - 1 + \lambda_{\ell}^e}{(1-\lambda_{\ell})^2}\sigma_{q,\ell}^2 \Bigg)+\frac{2\mu^2(e-1)}{3E_{\ell}}\sigma_{q,\ell}^2,
	\end{align}
	where we used Lemma \ref{lem:von-loc-inc-step} in $(a)$, and in $(b)$ we added and subtracted $w^o$ and used Jensen's inequality. Then, summing over $e$ results in:
	\begin{align}
		&\frac{1}{E_{\ell}}\sum_{e=1}^{E_{\ell}} \mathbb{E}\left \{ \Vert \w_{i-1}-\w_{\ell,e-1}\Vert^2 \big| \Li \right\} \notag \\
		&\leq \frac{2\mu^2\delta^2}{E_{\ell}^2}\sum_{n=1}^{N_{\ell}} \frac{1}{N_{\ell}^2 p_n^{(\ell)}} \Bigg(2\frac{(E_{\ell}+1)(1-\lambda_{\ell})-1+\lambda_{\ell}^{E_{\ell}+1}}{(1-\lambda_{\ell})^2}   \notag \\
		&\quad \times \left( \mathbb{E} \left\{ \Vert \widetilde{\w}_{i-1}\Vert ^2 \big| \Li \right\} +\mathbb{E} \left\{ \Vert w^o - w_{\ell}^o \Vert ^2 \big| \Li \right\} \right) \notag \\
		&\quad + \frac{E_{\ell}(E_{\ell}+1)(1-\lambda_{\ell})^2 - 2E_{\ell}(1-\lambda_{\ell}) + 2\lambda_{\ell} - 2\lambda_{\ell}^{E_{\ell}+1}}{(1-\lambda_{\ell})^3} \notag \\
		&\quad \times \frac{\mu^2}{K^2p_k^2} \sigma_{q,\ell}^2\Bigg)+ \frac{E_{\ell}(E_{\ell}-1)\mu^2}{3E_{\ell}^2}\sigma_{q,\ell}^2.
	\end{align}
	Taking the expectation of \eqref{eq:bd-qi} given the choice of the agents and plugging the above expression, we get:
	\begin{align}
		&\mathbb{E}\left\{ \Vert \bm{q}_i \Vert^2 \big| \Li \right\} \notag \\
		&\leq \frac{\delta^2}{L}\sum_{\ell \in \Li} \frac{1}{K^2p_{\ell}^2}  \Big(a\mu^2 \mathbb{E} \left\{ \Vert \widetilde{\w}_{i-1}\Vert ^2 \big| \Li \right\} \notag \\
		&\quad  + a\mu^2 \mathbb{E} \left\{ \Vert w^o - w_{\ell}^o \Vert ^2 \big| \Li \right\}\notag \\
		&\quad + (b \mu^4  + c \mu^2 )\sigma_{q,\ell}^2\Big) \frac{1}{B_{\ell}N_{\ell}^2}\mathbb{E}\left \{ \sum_{b\in \B{\ell}{e}} \frac{1}{\left(p_{b}^{(\ell)} \right)^2} \Bigg| \Li \right\}, \notag \\
		&= \frac{\delta^2}{L}\sum_{\ell \in \Li} \frac{1}{K^2p_{\ell}^2}  \Big(a\mu^2 \mathbb{E} \left\{ \Vert \widetilde{\w}_{i-1}\Vert ^2 \big| \Li \right\} \notag \\
		&\quad + a\mu^2 \mathbb{E} \left\{ \Vert w^o - w_{\ell}^o \Vert ^2 \big| \Li \right\}+ (b \mu^4  + c \mu^2 )\sigma_{q,\ell}^2\Big)\sum_{n=1}^{N_{\ell}}\frac{1}{N_{\ell}^2p_n^{(\ell)}},
	\end{align}
	where we introduced constants $a,b,c$ to make the notation simpler.
	Then, taking again the expectation to remove the conditioning and using Assumption \ref{assum:bdLocalMin}:
	\begin{align}
		\mathbb{E}\Vert \bm{q}_i \Vert^2 &\leq \delta^2\sum_{k=1}^K \frac{1}{K^2p_{k}} \sum_{n=1}^{N_k}\frac{1}{N_{k}^2p_n^{(k)}} \Big(a\mu^2 \mathbb{E}  \Vert \widetilde{\w}_{i-1}\Vert ^2  \notag \\
		&\quad + a\mu^2  \Vert w^o - w_{k}^o \Vert ^2 + (b \mu^4  + c \mu^2 )\sigma_{q,k}^2\Big) \notag \\
		& \leq  \delta^2\sum_{k=1}^K \frac{1}{K^2p_{k}} \sum_{n=1}^{N_k}\frac{1}{N_{k}^2p_n^{(k)}} \Big(a\mu^2 \mathbb{E}  \Vert \widetilde{\w}_{i-1}\Vert ^2 + a\mu^2  \xi^2 \notag \\
		&\quad + (b \mu^4  + c \mu^2 )\sigma_{q,k}^2\Big) .
	\end{align}	
	Further simplifying the notation gives us the desired result. Thus, since $a = O(\mu^{-1})$, $b = O(\mu^{-2})$ and $c = O(1)$, we get $\mathbb{E}\Vert \bm{q}_i\Vert^2 = O(\mu)$.
\end{proof}

\bibliographystyle{IEEEtran}
{\balance{\bibliography{SPAWC-refs}}}

\end{document}